\def\eqref#1{equation~\ref{#1}}
\def\1{\bm{1}}
\DeclareMathAlphabet{\mathsfit}{\encodingdefault}{\sfdefault}{m}{sl}
\SetMathAlphabet{\mathsfit}{bold}{\encodingdefault}{\sfdefault}{bx}{n}
\newtheorem{theorem}{Theorem}[section]
\crefname{theorem}{Theorem}{Theorems}
\newaliascnt{lemma}{theorem}
\newtheorem{lemma}[lemma]{Lemma}
\crefname{lemma}{Lemma}{Lemmas}
\newaliascnt{proposition}{theorem}
\crefname{proposition}{Proposition}{Propositions}
\newaliascnt{corollary}{theorem}
\crefname{corollary}{Corollary}{Corollaries}
\newaliascnt{fact}{theorem}
\crefname{fact}{Fact}{Facts}
\newaliascnt{definition}{theorem}
\newtheorem{definition}[definition]{Definition}
\crefname{definition}{Definition}{Definitions}
\newaliascnt{remark}{theorem}
\crefname{remark}{Remark}{Remarks}
\newaliascnt{conjecture}{theorem}
\crefname{conjecture}{Conjecture}{Conjectures}
\newaliascnt{claim}{theorem}
\crefname{claim}{Claim}{Claims}
\newaliascnt{question}{theorem}
\crefname{question}{Question}{Questions}
\newaliascnt{exercise}{theorem}
\crefname{exercise}{Exercise}{Exercises}
\newaliascnt{example}{theorem}
\crefname{example}{Example}{Examples}
\newaliascnt{notation}{theorem}
\crefname{notation}{Notation}{Notations}
\newaliascnt{problem}{theorem}
\crefname{problem}{Problem}{Problems}
\newcommand{\wt}{\widetilde}
\newenvironment{CompactItemize}{
\begin{list}{\tiny$\bullet$}{%
\setlength{\leftmargin}{10pt}
\setlength{\itemindent}{0pt}
\setlength{\topsep}{-1pt}
\setlength{\itemsep}{0pt}
}}
{\end{list}}
\title{Improved Algorithms for Kernel \\ 
Matrix-Vector Multiplication Under Sparsity Assumptions}
\author{Piotr Indyk  \\
MIT\\
\texttt{indyk@mit.edu} \\
\And
Michael Kapralov \\
EPFL \\
\texttt{michael.kapralov@epfl.ch} \\
\And
Kshiteej Sheth \\
EPFL \\
\texttt{kshiteej.sheth@epfl.ch}
\And
Tal Wagner \\
Tel Aviv University\\
\texttt{talwag@tauex.tau.ac.il}
}
\begin{document}

\maketitle

\begin{abstract}
Motivated by the problem of fast processing of attention matrices, we study fast algorithms for computing matrix-vector products for asymmetric Gaussian Kernel matrices $K\in \mathbb{R}^{n\times n}$. 
$K$'s columns are indexed by a set of $n$ keys $k_1,k_2\ldots, k_n\in \mathbb{R}^d$, rows by a set of $n$ queries $q_1,q_2,\ldots,q_n\in \mathbb{R}^d $, and its $i,j$ entry is $K_{ij} = e^{-\|q_i-k_j\|_2^2/2\sigma^2}$ for some bandwidth parameter $\sigma>0$. Given a vector $x\in \mathbb{R}^n$ and error parameter $\epsilon>0$, our task is to output a $y\in \mathbb{R}^n$ such that $\|Kx-y\|_2\leq \epsilon \|x\|_2$ in time subquadratic in $n$ and linear in $d$. Our algorithms rely on the following modelling assumption about the matrices $K$: the sum of the entries of $K$ scales linearly in $n$, as opposed to worst case quadratic growth. We validate this assumption experimentally, for Gaussian kernel matrices encountered in various settings such as fast attention computation in LLMs. We obtain the first subquadratic-time algorithm that works under this assumption,  for unrestricted vectors.
\end{abstract}

\section{Introduction}
Linear-algebraic operations on kernel matrices play an important role in machine learning. One of the most widely used operation computes a product of a Gaussian kernel matrix with another matrix or a vector. Formally, let $k:\mathbb{R}^{d}\times \mathbb{R}^{d}\rightarrow \mathbb{R}^+$ be such that $k(x,y) = e^{-\|x-y\|_2^2/2\sigma}$ for some parameter $\sigma>0$. The kernel matrix is defined by  two sets, keys $\{k_1,k_2,\ldots,k_n\}$ and queries $\{q_1,q_2,\ldots,q_n\}$, where $k_i$'s and $q_i$'s are elements of $\mathbb{R}^{d}$. The entries of $K$ are defined as  $K_{i,j}=k(q_i,k_j)$ for all $i,j\in [n]$. The computational task is defined as follows: given $k_i$'s, $q_i$'s and  $x\in \mathbb{R}^{n}$,  compute the product $Kx$, or its approximation. In typical applications, both $n,d$ are large but $n\gg d$.

The kernel matrix-vector product has many applications in machine learning and artificial intelligence. For example, if $x$ is the all-ones vector, this operation corresponds to {\em Kernel Density Estimation},  a classic tool in non-parametric statistics, where the kernel function is used to extend the empirical distribution function over a discrete set of points smoothly to the whole space. 
More recently, the problem emerged as a key computational subroutine in  transformers~\citep{vaswani2017attention}.  One of the key computational task in training and inference of transformers is to compute the product $AV$, where $A_{i,j}=e^{\langle q_i, k_j\rangle}$ is the ``attention matrix`` and $V$ consists of $d$ column vectors $x_i$. A recent paper~\citep{zandieh2023kdeformer} gave a reduction that replaces  attention matrices with Gaussian kernel matrices, so that the algorithms for Gaussian kernel matrices could be applied to attention matrices as well. A fast kernel matrix vector product for Gaussian kernel matrices can then not only be used for fast attention computation but for other important computational tasks such as investigating the spectrum of attention matrices quickly by computing its eigenvalues using the kernel noisy power method presented in the work of \citet{backurs2021faster}. Thus our motivation is to study the kernel matrix vector product, rather than solely focus on fast attention computation which is the case in the works of \citet{zandieh2023kdeformer,han2023hyperattention} for example.

A direct algorithm for kernel matrix-vector product takes time $O(n^2 d)$. The quadratic dependence on $n$ has been widely identified as a significant bottleneck in many applications, including transformers~\citep{kitaev2020reformer,choromanski2020rethinking,beltagy2020longformer,chen2021scatterbrain,wang2020linformer,zaheer2020big,xiong2021nystromformer,zandieh2023kdeformer,han2023hyperattention}. Unfortunately, \citet{backurs2017fine,keles2023computational,alman2024fast} gave evidence that algorithms that compute $Kx$ or $AV$ in time sub-quadratic in $n$ are unlikely to exist for high-precision algorithms (i.e. algorithms that can achieve $1/poly(n)$ error in polynomial time), in the worst case. In the low precision (i.e. algorithms that can achieve $1/poly(\log n)$ error in polynomial time) high dimensional regime, the work of \citet{backurs2021faster} gave a $o(n^2)$ time approximate kernel matrix vector product algorithm, however it could only handle multiplying the matrix with non-negative vectors. This forms the baseline for our work.

Most of the algorithmic efforts have focused on designing {\em approximation} algorithms for the {\em special cases} of matrices which occur in practice. The contributions of these studies\footnote{See Related work for the overview.}  are two-fold. First, they  identify classes of matrices that accurately model the matrices occurring in practice. 
Second,  they develop efficient algorithms for the identified classes of matrices.

\subsection{Our Results}
In this paper we present a new model for Gaussian kernel matrices that are observed in practice especially in the context of large language models, and propose improved approximate matrix-vector multiplication algorithms. Formally, for an error parameter $\epsilon>0$, keys $k_1 \ldots k_n$ and queries $q_1 \ldots q_n$ defining $K$, and a vector $x$,  we want to output a vector $y$ in time $o(n^2)\cdot poly(d,1/\epsilon)$ such that $\|Kx-y\|_2\leq \epsilon \|x\|_2$. 

It has been observed in practice that on average over an input sequence of length $n$, each token in the sequence has high correlation with only few other tokens. This implies for self-attention, Gaussian kernel and other similarity matrices there are about $n$ large entries. This motivates our modelling assumption about Gaussian kernel matrices $K$:
\begin{equation}
  \tag{A}\label{eq:A}
  \parbox{\dimexpr\linewidth-5em}{%
    \strut
    The ratio of the sum of all except the largest $n$ entries of $K$ (i.e. the sum of the \emph{tail} of $K$) and the sum of the largest $n$ entries of $K$ (i.e. the sum of the \emph{head} of $K$) is at most a constant $c>0$ independent of $n$.
    \strut
  }
\end{equation}
In Section \ref{sec:experiments} we validate this assumption for a collection of Gaussian kernel matrices $K$ derived from attention matrices obtained by running BERT \citep{DBLP:journals/corr/abs-1810-04805} on sentences from Stanford Question Answering Dataset\citep{rajpurkar-etal-2016-squad} (Section \ref{sec:experiments} contains formal details about obtaining Gaussian kernel matrices from self-attention matrices). 
For each attention head and layer in BERT, we compute the head-to-tail ratio as a function of matrix size.
Our experiments shows that the maximum value of this ratio $c$ is at most $4.6$, over {\em all} sentences, heads, layers and matrix size values. This confirms the validity of our assumption. We also perform this experiment, as well as additional experiments on the scaling behaviour of $c$ with the context length on BERT and other language models such as RoBERTa \citep{liu2019roberta} and GPT \citep{radford2018improving} in the Appendix \ref{sec:additional_experiments}.
 
In Section \ref{sec:experiments} we also investigate a stronger assumption, where (informally) one postulates that there is a small uniform upper bound on the values of the entries in the tail of the matrix, which is orders of magnitude smaller than the values of the entries in the head of the matrix.\footnote{Note that if this gap is large enough, it implies our assumption.}  This is similar to the assumption made in~\citet{han2023hyperattention}, though in this paper we consider it in the context of Gaussian kernel matrices $K$, not attention matrices. Our experiments indicate that this assumption does not model matrices $K$ well. Specifically, we show that the median ratio between the smallest entry of the head (i.e., the $n^{th}$ largest entry of $K$) and the largest entry of the tail (i.e., the $(n+1)^{th}$ largest entry of $K$) is very close to $1$. In fact, even the median ratio between the $n^{th}$ and $(2n)^{th}$ largest entry is about 20 in most cases. This demonstrates the usefulness of our assumption, which quantifies the tail according to the $\ell_1$ norm, not the $\ell_{\infty}$ norm. Please refer to Section \ref{sec:experiments} for precise details.

Our algorithmic result is encapsulated by the following theorem.

\begin{theorem}\label{thm:basic_algo_main}
Under the assumption that $K$ satisfies \ref{eq:A}, then in time $\wt{O}(dn^{1.89}/\epsilon^2)$, the Algorithm \ref{alg:full_algo} \textsc{ApproxKMV} outputs $y\in \mathbb{R}^n$ such that it satisfies $\|Kx - y\|_2 \leq \epsilon \|x\|_2$ with probability $0.99$ \footnote{Success probability $1-\delta$ can be achieved for any $\delta>0$, with an additional $\log(1/\delta)$ factor in the runtime.}.
\end{theorem}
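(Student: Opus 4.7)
My plan is to prove the theorem by an approximate column-sampling algorithm that separately handles a small ``head'' of large entries of $K$ and the remaining ``tail''. Write $K=K_H+K_T$, where $K_H$ contains all entries of $K$ at least a threshold $\tau$ and $K_T$ contains the rest. The threshold $\tau$ will be chosen so that $|H|:=|\{(i,j):K_{ij}\ge \tau\}|$ is a small polynomial $n^\beta$ with $\beta>1$.

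First, I would identify $H$ via Euclidean locality-sensitive hashing: since $K_{ij}=e^{-\|q_i-k_j\|_2^2/(2\sigma^2)}$ is monotone in $\|q_i-k_j\|_2$, thresholding $K_{ij}\ge \tau$ is equivalent to a Euclidean near-neighbor search at radius $r=\sigma\sqrt{2\log(1/\tau)}$. Standard LSH enumerates $H$ in time $\wt{O}(n^{1+\rho}\,d)$ for an appropriate LSH exponent $\rho<1$. Given $H$, we compute $K_H x$ exactly in time $O(|H|\,d)$.

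Second, I would use Assumption \ref{eq:A} to control the tail. Entries of $K$ are bounded by $1$, so the top-$n$ mass satisfies $\|K_{\text{top-}n}\|_1\le n$; combined with \ref{eq:A}, this gives $\|K\|_1\le (1+c)n$. An averaging argument then yields $\|K_T\|_\infty\le (1+c)n^{1-\beta}$, and consequently $\|K_T\|_F^2\le \|K_T\|_\infty\cdot \|K_T\|_1 = O(n^{2-\beta})$. Third, I would estimate $K_T x$ by importance sampling: draw $s$ columns $j_1,\dots,j_s$ with probabilities $p_j\propto x_j^2 \cdot \widetilde{\|(K_T)_{\cdot,j}\|_2^2}$, where the column norms are estimated via a KDE subroutine of the type \citet{backurs2021faster} provides for non-negative inputs, and form the unbiased Horvitz--Thompson estimator $\hat y=\sum_\ell \frac{x_{j_\ell}}{s\,p_{j_\ell}}(K_T)_{\cdot,j_\ell}$. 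A standard variance calculation gives $\E\|\hat y-K_T x\|_2^2\lesssim \|K_T\|_F^2\,\|x\|_2^2/s$, so $s=\Theta(n^{2-\beta}/\epsilon^2)$ suffices for $\ell_2$ error $\epsilon\|x\|_2$; a Markov bound then secures the $0.99$ success probability.

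Fourth, I would balance the runtime contributions---$\wt{O}(n^{1+\rho}\,d)$ for LSH preprocessing, $\wt{O}(n^\beta\,d)$ for exact head computation, and $\wt{O}(n s\,d)$ for evaluating the sampled columns (each either exactly, or via a further LSH/KDE invocation)---by a suitable choice of $\beta$, which should produce the stated $\wt{O}(dn^{1.89}/\epsilon^2)$ bound. The main obstacle will be keeping the per-sample evaluation cost sub-linear in $n$: naively computing a single column $(K_T)_{\cdot,j}$ requires $O(nd)$ time, which would inflate the algorithm back to quadratic. Overcoming this likely requires reusing the LSH data structure from the head-identification step to restrict each column evaluation to its non-negligible entries, and then carefully propagating the resulting approximation error through both the sampling weights and the final variance analysis so that the $\epsilon$-dependence does not blow up when combining the $n$-many estimated columns.
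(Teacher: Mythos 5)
Your route is genuinely different from the paper's: you split $K$ itself into a global head (all entries above a single threshold $n^{1-\beta}$, recovered by LSH and applied exactly) and a tail estimated by column importance sampling with KDE-estimated column norms, whereas the paper first preprocesses $x$ (Lemma \ref{lem:estimate_heavy_x}), recovers heavy keys \emph{per row} (Lemma \ref{lem:estimate_heavy_keys}), and then estimates the light contribution by uniform per-row subsampling whose per-row repetition count is an adaptively estimated row variance $\sum_j x_j^2K_{ij}^2$, obtained from a kernel-squared KDE with bucketing of $x$ (Lemma \ref{lem:approx_light_keys}). The skeleton is plausible, and Assumption (\ref{eq:A}) does give you $\|K\|_1\le (1+c)n$, hence $|H|\le (1+c)n^{\beta}$ and $\|K_T\|_F^2 \le n^{1-\beta}\|K_T\|_1 = O(n^{2-\beta})$, exactly paralleling the paper's bound $\sum_{i}\sum_{j\in L_i}K_{ij}^2=O(n^{1-\alpha})$.

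However, the quantitative core is either misstated or missing. (i) The claim that ``standard LSH enumerates $H$ in time $\wt O(n^{1+\rho}d)$'' is not correct as stated: to catch all pairs with $K_{ij}\ge n^{1-\beta}$ you need on the order of $n^{\beta-1}\log n$ hash repetitions, and bounding the spurious collisions with light keys itself requires Assumption (\ref{eq:A}) via the level-set argument in the proof of Lemma \ref{lem:estimate_heavy_keys}; the honest cost is $\wt O(dn^{2\beta-1})$, which depends on your threshold, not on a universal exponent $\rho<1$. (ii) With $p_j\propto x_j^2\|(K_T)_{\cdot,j}\|_2^2$ the second moment is $|\mathrm{supp}(x)|\cdot\sum_j x_j^2\|(K_T)_{\cdot,j}\|_2^2$, which can exceed $\|x\|_2^2\|K_T\|_F^2$ by a factor up to $n$; the stated bound $\|K_T\|_F^2\|x\|_2^2/s$ needs $p_j\propto |x_j|\,\|(K_T)_{\cdot,j}\|_2$ (Cauchy--Schwarz). (iii) The KDE step, which is where the paper's $n^{1.78+\gamma}$ term and hence the exponent $1.89$ come from, is left unanalyzed: the KDE returns $\sum_i K_{ij}^2$ \emph{including} head entries, so after subtracting the head the relative error can dwarf the tail column norm and yield underestimated probabilities (which blow up the variance); moreover the admissible KDE lower bound and the oversampling caused by estimation error scale with $x_j^2$, which in your scheme is unbounded --- the paper controls exactly this by capping $x_j^2\le n^{\gamma}$ in preprocessing, bucketing $x$, and adding back the $n^{-0.218}t_i$ term to force an overestimate. (iv) Finally, the obstacle you flag is not one: with $\beta>1$, evaluating the $s=\Theta(n^{2-\beta}/\epsilon^2)$ sampled columns exactly costs $O(dn^{3-\beta}/\epsilon^2)=o(dn^2/\epsilon^2)$, so no second LSH pass is needed; the real bottlenecks are (i) and (iii), and since you never carry out the balancing, the claimed $\wt O(dn^{1.89}/\epsilon^2)$ bound is asserted rather than derived.
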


The complete algorithm and its proof is presented in Section \ref{sec:full_algo}. Crucially, the running time is $o(n^2)$. Prior to our work, subquadratic time algorithms in the high-dimensional regime (i.e. running time depends \emph{polynomially} rather than \emph{exponentially} on $d$) for kernel matrix-vector multiplication were not known for general vectors $x$, see \Cref{sec:related}. To summarize, our contributions are as follows:
\begin{CompactItemize}
    \item We put forward a new modelling assumption for kernel matrices;
    \item On the one hand, we show empirically that our modelling assumption holds for kernel matrices that arise in modern transformer based language models;
    \item On the other hand, we show that our modelling assumption provably leads to \emph{subquadratic time} algorithms for approximate matrix vector multiplication. As a result, under our modeling assumption we obtain a sub-quadratic time algorithm for high dimensional \footnote{Our result of Theorem \ref{thm:basic_algo_main} has a subquadratic runtime even for $d=n^{o(1)}$, as compared to previous work of \citep{alman2024fast} which was only applicable for $d=O(\log n)$.} approximate kernel-matrix vector multiplication, that runs for general vectors. 
\end{CompactItemize}

\if
Let $K=\{k_1,k_2,\ldots,k_n\}$ be the set of keys and $Q= \{q_1,q_2,\ldots,q_n\}$ be the set of queries, both subsets of  $\mathbb{R}^{d}$. Let $k:\mathbb{R}^{d}\times \mathbb{R}^{d}\rightarrow \mathbb{R}^+$ be such that $k(x,y) = e^{-\|x-y\|_2^2/2\sigma}$ for all $x,y\in \mathbb{R}^n$. Let $K\in \mathbb{R}^{n\times n}$ be such that $K_{i,j}=k(q_i,k_j)$ for all $i,j\in [n]$ be the asymmetric kernel matrix corresponding to the keys and queries. We begin by defining the problem we want to solve,
\begin{definition}[Asymmetric Kernel matrix-vector multiplication]\label{def:asymmetric_matvec}
Given any $x\in \mathbb{R}^{n}$ and error parameter $\epsilon>0$ we want to output a $y\in \mathbb{R}^n$ such that $\|Kx-y\|_2\leq \epsilon \|x\|_2$ in time $o(n^2)\cdot poly(d,1/\epsilon)$.
\end{definition}

Our modelling assumption on $K$ is that its $\ell_1$ norm $\|K\|_1:= \sum_{i,j\in [n]}K_{i,j} = O(n)$

After normalization, we will assume that $\|x\|_2 = \sqrt{n}$. 
\fi

\subsection{Related Work}\label{sec:related}
We follow a line of work on hashing-based algorithms for kernel computations on high-dimensional points, pioneered by \citet{charikar2017hashing}, and continued in \citet{backurs2018efficient,siminelakis2019rehashing,backurs2019space,charikar2020kernel,backurs2021faster,karppa2022deann,zandieh2023kdeformer}. Starting at the problem of kernel density estimation (KDE), \citet{charikar2017hashing} considered the data structure setting, defined as follows: Let $X$ be a dataset of points in $\mathbb{R}^d$, and let $\mu\in(0,1)$ be a precision parameter (for intuition, it is instructive to consider $\mu=1/n$ where $n=|X|$). The goal is to preprocess $X$ so as to enable efficiently reporting the KDE value $\tfrac1{|X|}\sum_{x,y}k(x,y)$ at any incoming query $y$, as long as the its true KDE is at least $\mu$. 
By vanilla uniform sampling, KDE queries can be answered up to relative error $1+\epsilon$ in time linear in $1/\mu$, namely $O(d/\epsilon^2\mu)$. \citet{charikar2017hashing} showed that, by using \emph{locality sensitive hashing} (LSH) \citep{indyk1998approximate}, it is possible to answer KDE queries in time $O(d/\epsilon^2\mu^{\rho})$ with $\rho<1$, which is \emph{sublinear} in $1/\mu$. For the Gaussian kernel, currently the best known value for $\rho$ is $\rho=0.173+o(1)$, due to \citet{charikar2020kernel}. 

\citet{charikar2017hashing} also observed that their techniques can be used for fast algorithms for estimating the matrix product $Kx$ of a kernel matrix $K$ and a vector $x$. 
In \citet{backurs2021faster} this was formalized into an algorithm that, given an $n\times n$ kernel matrix $K$ and $x\in\mathbb{R}^n$, outputs a vector $y$ that satisfies $\|Kx-y\|_2 \leq \epsilon\|Kx\|_2$, in time $\tilde O(n^{1+\rho}/\epsilon^{3+2\rho})$, provided that $x$ has only non-negative entries. For Gaussian kernel matrices, by plugging the aforementioned bound on $\rho$ from \citet{charikar2020kernel}, the dependence on $n$ is $n^{1.173+o(1)}=o(n^2)$. To our knowledge, this is the only prior subquadratic time algorithm for kernel matrix-vector multiplication in the high-dimensional (i.e. when $d$ is very large) regime.

The main limitation of \citet{backurs2021faster} is the requirement that $x$ is non-negative. They used their kernel matrix-vector multiplication algorithm as a subroutine for estimating the top eigenvalue of $K$, which based on the classical Perron-Frobenius theorem, allowed them to only deal with non-negative vectors. However, in many applications, there is no way to enforce the non-negativity of $x$.
Note that this limitation is inherent to their approach: the error in their approximation guarantee is $\epsilon\|Kx\|_2$, which in general can be zero (if $x$ is in nullspace of $K$). Thus, in general it may require computing $Kx$ exactly, which takes time $\Omega(n^2)$.\footnote{For example, with kernel matrices, one can essentially realize a zero matrix $K_0$, and also ``hide'' a single $1$-entry in an otherwise zero matrix $K_1$, see, e.g., \citet{backurs2017fine}. Computing $Kx$ exactly entails distinguishing between $K_0$ and $K_1$ with high probability, which requires $
\Omega(n^2)$ time.}

To overcome this, we study the natural approximation guarantee  $\|Kx-y\|_2\leq \epsilon\|x\|_2$ instead of $\epsilon\|Kx\|_2$, see Theorem~\ref{thm:basic_algo_main}.  This notion of error is independent of whether $x$ lies in the nullspace of $K$ or not. This allows us to achieve subquadratic time algorithms without any restrictions, and in particular removes the non-negativity restriction on $x$.

Nonetheless, we note that our algorithm improves over \citet{backurs2021faster} even for inputs restricted to their setting, i.e., where $x$ is non-negative. This is true in two senses. First, for such inputs, their algorithm's error $\epsilon \|Kx\|_2$ is always at least as large as our error, $\epsilon \|x\|_2$. This is because $K$, being a kernel matrix, has non-negative entries with an all-$1$s diagonal, hence $\|Kx\|^2_2 = \|x+(K-I)x\|^2_2 = \|x\|^2_2+2x^T(K-I)x+\|(K-I)x\|^2_2\geq \|x\|^2_2$. 
Second, there are error regimes where even for non-negative $x$, their algorithm fails to run in subquadratic time, while ours does so. 
For example, consider the case when $x$ is the all ones vector denoted by $x=\mathbbm{1}_n$. 
Then the error incurred by the algorithm of \citet{backurs2021faster} will be $\epsilon\|K\mathbbm{1}_n\|_2$ and will run in time $O(n^{1+\rho}/\epsilon^{3+2\rho})$ where $\rho=0.173$ as mentioned previously. Consider the case when $K$ contains one row of all ones and all other rows are $0$, then $\epsilon \|K\mathbbm{1}_n\|_2 = \epsilon \cdot n$. Thus we would have to re-scale $\epsilon$ by $n^{0.5}$ to achieve our error guarantee of $\epsilon \|\mathbbm{1}\|_2= \epsilon \cdot n^{0.5}$. Thus the runtime of \citet{backurs2017fine} will be at least $n^{1+\rho}\cdot (n^{0.5\cdot (3+2\rho)}) = 
\Omega(n^2)$, failing to achieve subquadratic time better than na\"ive matrix-vector multiplication. On the other hand our algorithm achieves this guarantee in $o(n^2)$ time. 


We note that besides LSH, there are other approaches for fast kernel computations that can be used with the above line of work, like the fast Gauss transform \citep{greengard1991fast}. While this also leads to kernel matrix-vector multiplication algorithms with running time subquadratic in $n$, the running time depends exponentially on the dimension $d$ of the underlying points $\{k_i,q_j\}$ that define the kernel matrix, and is thus unsuitable for high-dimensional regimes, and particularly for deep learning models.

\subsection{Overview of our Techniques}\label{sec:techniques}

We now give a high level overview of our algorithm, its details with proofs are presented in Section \ref{sec:full_algo}. Recall our goal is the following: given an error parameter $\epsilon>0$, keys $k_1 \ldots k_n$ and queries $q_1 \ldots q_n$ defining $K$, and a vector $x$, we want to output a vector $y$ such that $\|Kx-y\|_2\leq \epsilon \|x\|_2$. 

\textbf{Pre Processing $x$}: Firstly since our guarantee is free from the scaling of $x$, we assume $\|x\|_2^2 = n$. Now we pre-process $x$ to explicitly calculate the contribution of extremely large entries of $x$ to $Kx$, since $\|x\|_2^2 = n$ we can't have too many extremely large entries in $x$. Next we round the extremely small values of $x$ to $0$, since the entries are extremely small and entries of $K$ are bounded by $1$ this incurs negligible error. This pre-processing of $x$ is described formally in Section \ref{sec:x_preprocessing}, and it renders $x$'s remaining values to be in a bounded range.

\textbf{Finding heavy keys}: In the next phase for every query $q_i$ for $i\in [n]$, we will find all the keys $k_j$ for $j\in [n]$ such that $k(q_i,k_j)$ is large. We call such keys ``heavy'' for query $q_i$. Then we will calculate exactly the contribution of such heavy keys to $(Kx)_i$ for every $i\in [n]$. We will show this can be done in time $o(n^2)$ by first showing that assumption \ref{eq:A} on $K$ implies we cannot have too many heavy keys per query on average, coupled with a fast locality sensitive hashing based recovery procedure to find all heavy keys per query. This is discussed with all details in Section \ref{sec:finding_heavy_keys}.

\textbf{Estimating the contribution of light keys}: The final phase of our algorithm will be a random sampling based procedure to estimate the contribution of all the non-heavy, henceforth light, keys corresponding to query $q_i$ to $(Kx)_i$ for all $i\in [n]$. We will uniformly sub-sample each light key with probability $1/n$ and calculate the (scaled) contribution of the surviving keys to get a basic unbiased estimator for the contribution of all light keys. We will show that the variance of this estimator will depend on the sum of squares of the contribution of every light key to $(Kx)_i$. This variance will also be the number of repetitions, up to $poly(\log n,1/\epsilon)$ factors, we need to do of the basic estimator to reduce its variance by averaging to within our error bound. Our main innovation is to show that the number of repetitions for each row, which may potentially be different across rows, can approximated using a fast Gaussian kernel density estimation primitive. Please refer to Section \ref{sec:approx_light} for full details.

\section{Preliminaries and Notation}
For any integer $n>0$ we let $[n]$ to denote the interval $\{1,2,\ldots,n\}$. We let $\mathbbm{1}_n\in \mathbb{R}^{n}$ denote the all ones vector and we use $\mathbbm{1}_{E}$ to be the indicator variable for any event $E$. For any matrix $A\in \mathbb{R}^{m\times n}$ for some integers $m,n>0$,  we denote its $i,j$ entry for any $i\in [m],j\in [n]$ as $A_{i,j}$. We let $A[:i,:j]$ to be the sub matrix of $A$ that contains first $i$ rows first $j$ columns for any $i\in [m]$ and $j\in [n]$. For any vector $x$ we use $\|x\|_2,\|x\|_1$ to denote its $\ell_2,\ell_1$ norms respectively. For any matrix $A$ we use $\|A\|_1$ to denote the sum of all of its entries. We use $\wt O(\cdot)$ to suppress $poly(\log n)$ factors.

The first tool we will need in our algorithm are locality sensitive hash (LSH) functions which are used for solving high-dimensional approximate nearest neighbour search problems \citep{indyk1998approximate,andoni2008near}. We first state the following claim about the LSH function of \citet{andoni2008near} stated in a convenient form for us as Claim 19 in \citet{charikar2020kernel}.
\begin{lemma}[Claim 19 of \citet{charikar2020kernel}]\label{lem:andoni-indyk}
For any constant $\alpha\in [0,1]$, there exists a family of hash functions $\mathcal{H}$ such that for $r_{near} = \sqrt{2\sigma^2\alpha \ln n}$, the following holds for any $r_{far}\geq r_{near}$,
\begin{enumerate}
    \item $\mathbb{P}_{h\sim \mathcal{H}}[h(p)=h(q)]\geq n^{-\alpha}$ for any $\|p-q\|_2\leq r_{near}$.
    \item  $\mathbb{P}_{h\sim \mathcal{H}}[h(p)=h(q)]\leq n^{-c^2\alpha(1-o(1))}$ for all $\|p-q\|_2 = r_{far}$ and $c = \min \{(r_{far}/r_{near}),\log^{1/7}n\}$ \footnote{The $o(1)$ factor in the exponent in the far collision probability is $O(\log \log n/\log^{1/3}n)$, and it is justified as long as $c=O(\log^{1/7} n)$.}.
\end{enumerate}
\end{lemma}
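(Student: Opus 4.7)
The plan is to invoke the Andoni--Indyk Euclidean LSH construction and tune a single scale parameter so that the near-collision probability lands at exactly $n^{-\alpha}$; the far-collision bound will then fall out of the family's characteristic $\rho = 1/c^{2} + o(1)$ tradeoff. The construction itself has two stages: first, project the input points to an auxiliary space $\R^{t}$ with $t = \Theta(\log^{1/3}n)$ using a scaled Gaussian matrix $G$; second, partition $\R^{t}$ by random ball-carving with balls of radius $R$, and define $h(p)$ to be the identifier of the first ball (in a pseudorandom ordering over a lattice net) that captures $Gp$. The hash family $\mathcal{H}$ is indexed by the random seeds generating $G$, the net, and the ordering.

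First I would calibrate $R$ against $r_{\mathrm{near}}$. The Andoni--Indyk analysis shows that, in the regime of interest, the collision probability for a pair at distance $r$ depends only on $r/R$ and satisfies
\begin{equation*}
\Pr_{h \sim \mathcal{H}}[h(p) = h(q)] \;=\; \exp\!\left(-\tfrac{r^{2} t}{2 R^{2}}\,(1 + o(1))\right),
\end{equation*}
where the $o(1)$ absorbs concentration of $\|Gp - Gq\|_{2}$ about its mean $\|p-q\|_{2}\sqrt{t}$ together with boundary effects of the ball carving. Setting $R$ so that $r_{\mathrm{near}}^{2} t / (2 R^{2}) = \alpha \ln n$ gives a collision probability of exactly $n^{-\alpha(1+o(1))}$ at distance $r_{\mathrm{near}}$, which (after slightly inflating $R$ to absorb the $o(1)$) yields part (1): any pair at distance $\leq r_{\mathrm{near}}$ collides with probability at least $n^{-\alpha}$, by monotonicity of the collision formula in $r$.

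For part (2), at distance $r_{\mathrm{far}} = c \cdot r_{\mathrm{near}}$ the same formula gives $\exp(-c^{2}\alpha \ln n \,(1+o(1))) = n^{-c^{2}\alpha(1-o(1))}$. Tracking the error terms in the Gaussian concentration of $\|Gp-Gq\|_{2}/\|p-q\|_{2}$ and in the ball-carving analysis shows that the $o(1)$ in the exponent is $O(\log\log n / \log^{1/3} n)$, but only as long as the separation ratio being analysed satisfies $c = O(\log^{1/7} n)$ --- beyond that, the Gaussian tail approximations break down. This is exactly the origin of the truncation $c = \min\{r_{\mathrm{far}}/r_{\mathrm{near}}, \log^{1/7} n\}$ in the statement, and for any actual separation larger than $\log^{1/7} n \cdot r_{\mathrm{near}}$ we simply apply the bound at $c = \log^{1/7} n$, which is only stronger.

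The main obstacle, and the reason this requires Andoni--Indyk rather than the simpler $p$-stable LSH of Datar et al., is obtaining the $c^{2}$ (and not merely linear-$c$) exponent. This requires the collision probability to be Gaussian-shaped in $r/R$, which in turn requires pushing the projection dimension up to $t = \Theta(\log^{1/3} n)$: too small a $t$ degrades the tail to exponential-in-$(r/R)$ rather than Gaussian-in-$(r/R)$, while too large a $t$ makes the $o(1)$ concentration errors blow up. The delicate balancing of $t$, $R$, and the carving resolution to simultaneously achieve (i) near-collision at least $n^{-\alpha}$, (ii) far-collision at most $n^{-c^{2}\alpha(1-o(1))}$, and (iii) a sub-constant $o(1)$ valid up to $c \leq \log^{1/7} n$, is the technical heart of the argument and would be imported essentially verbatim from the Andoni--Indyk construction.
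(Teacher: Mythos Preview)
The paper does not prove this lemma. It is stated in the Preliminaries section as a direct quotation of Claim~19 from \citet{charikar2020kernel}, which itself repackages the Andoni--Indyk Euclidean LSH guarantee; the paper treats it as a black box and moves on. There is therefore no ``paper's own proof'' to compare your proposal against.

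That said, your sketch is a reasonable outline of why the cited result holds: dimension reduction followed by ball carving does produce a Gaussian-shaped collision probability in the distance, and calibrating the scale so that $r_{\mathrm{near}}$ lands at collision probability $n^{-\alpha}$ then yields $n^{-c^2\alpha(1-o(1))}$ at $r_{\mathrm{far}} = c\cdot r_{\mathrm{near}}$. The origin of the $\log^{1/7} n$ cap and the $O(\log\log n/\log^{1/3} n)$ error term is also correctly attributed to the parameter balancing in the Andoni--Indyk analysis. If anything, the specific choice $t = \Theta(\log^{1/3} n)$ for the projection dimension should be double-checked against the original paper's parameter setting, but the structural argument is sound. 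For the purposes of this paper, though, none of this is needed: the lemma is simply invoked as a known tool.
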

We will also use recent algorithms for fast Gaussian kernel density estimation (KDE) \citep{charikar2020kernel,charikar2017hashing,backurs2019space}. In this problem we are given a dataset $P\subseteq R^{d}$ containing $n$ points $|P|=n$, the Gaussian kernel $k(p,q) = e^{-\|p-q\|_2^2/2\sigma^2}$ for some bandwidth parameter $\sigma>0$ and $p,q\in \mathbb{R}^d$. The goal is to preprocess the dataset to create a data structure such that at the query phase when given a query $q\in \mathbb{R}^d$, the data structure can approximate $(\sum_{p\in P}k(p,q))/n$ up to $1\pm \beta$ relative error in time $o(n)$. We will use the following fast Gaussian KDE result of \citet{charikar2020kernel}.
\begin{theorem}[Theorem 2 of \citet{charikar2020kernel}]\label{thm:gaussian_kde}
Suppose we are given a set of $n$ points $P\subseteq \mathbb{R}^d$ and parameters $\beta,\mu >0$. For any point $q\in \mathbb{R}^d$ let $\mu(q) = (\sum_{i=1}^n e^{-\|k_i-q\|_2^2/2\sigma^2})/n$. Then there exists a data-structure with pre-processing time $O((\beta^{-2}dn/\mu^{0.173})\cdot \log(1/\delta))$, such that for any query $q$ the data structure can output an approximation to $\mu(q)\cdot \mathbbm{1}_{\{\mu(q)\geq \mu\}}$ up to $1\pm\beta$ relative error in time $O((\beta^{-2}d/\mu^{0.173})\cdot \log(1/\delta))$ and success probability $1-\delta$. 
\end{theorem}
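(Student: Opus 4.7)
The plan is to establish the theorem via the \emph{hashing-based estimator} (HBE) framework of Charikar--Siminelakis, which produces a single-sample unbiased estimator of $\mu(q)$ whose variance we control through collision probabilities and reduce by averaging. Given a distribution $\mathcal{H}$ over hash functions on $\mathbb{R}^d$ with collision probability $p(x,y) := \mathbb{P}_{h\sim\mathcal{H}}[h(x)=h(y)]$, sample $h\sim\mathcal{H}$, look at the bucket $B_h(q) := \{p \in P : h(p)=h(q)\}$, and define $Z_h(q) := \frac{1}{n}\sum_{p \in B_h(q)} k(p,q)/p(p,q)$. A direct calculation gives $\mathbb{E}[Z_h(q)] = \mu(q)$ and $\Var[Z_h(q)] \leq \frac{1}{n^2}\sum_{p\in P} k(p,q)^2/p(p,q)$, so the whole task reduces to designing $\mathcal{H}$ that makes $p(p,q)$ track $k(p,q)$ over the entire relevant range of distances.

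For the Gaussian kernel, a single LSH scale is insufficient because $k(p,q)$ varies across many orders of magnitude depending on $\|p-q\|$. The plan is therefore to build $O(\log n)$ independent instances of the Andoni--Indyk LSH family of \Cref{lem:andoni-indyk} at geometrically spaced near-radii $r_\ell$, and use the instance at scale $r_\ell$ to estimate the contribution of dataset points at distance roughly $r_\ell$ from the query. Equivalently, partition distances into dyadic annuli and assign one LSH level per annulus; the overall estimator is a weighted sum of per-level HBEs, and unbiasedness together with a union-of-scales argument costs only $O(\log n)$ extra variance.

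The main technical hurdle is proving the variance bound that yields the exponent $0.173$. For any query $q$ with $\mu(q)\geq\mu$, the worst-case dataset maximizing the relative variance $\Var[Z_h(q)]/\mu(q)^2$ places all of $P$ in a single annulus at distance $r^* = \sigma\sqrt{2\log(1/\mu)}$, since this is the distance at which the per-point kernel value matches $\mu$. At that radius, the ratio $k(r)^2/p(r)$ implied by \Cref{lem:andoni-indyk} is a function of the LSH parameter $\alpha$, and optimizing over $\alpha$ reduces to a one-dimensional exponent optimization balancing the Gaussian decay $e^{-r^2/\sigma^2}$ against the near/far collision exponents of Andoni--Indyk. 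The numerical value of this optimum for the Andoni--Indyk family is $\rho \leq 0.173 + o(1)$, giving $\Var[Z_h(q)] \leq \mu(q)^2 \cdot \mu^{-\rho}$ up to polylog factors. This is the step that yields the quantitative improvement over older HBE bounds (such as $\rho=1/4$ from standard LSH or $\rho=1/2$ from uniform sampling), and it is where the concrete properties of \Cref{lem:andoni-indyk}, in particular the sharp $c^2\alpha$ exponent in the far-collision probability, are used.

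With the single-sample bound in hand, the rest is mechanical. Averaging $T = O(\beta^{-2}\mu^{-0.173})$ independent copies of $Z_h(q)$ gives a $(1\pm\beta)$-approximation of $\mu(q)$ with constant probability by Chebyshev, and taking the median of $O(\log(1/\delta))$ such averages drives the failure probability below $\delta$. Pre-processing draws $T\log(1/\delta)$ independent hash functions and stores each of the $n$ points in the corresponding table, costing $\wt O(\beta^{-2} dn \log(1/\delta)/\mu^{0.173})$; at query time each table is probed once, a single point is sampled from its bucket, and one kernel value is evaluated, for total time $\wt O(\beta^{-2} d \log(1/\delta)/\mu^{0.173})$. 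Finally, to output $\mu(q)\cdot \mathbbm{1}_{\{\mu(q)\geq \mu\}}$ as stated, return $0$ whenever the final estimate falls below $(1-\beta)\mu$: when $\mu(q)\geq\mu$ the estimate is within $(1\pm\beta)\mu(q)$ and nonzero, and when $\mu(q)<\mu$ outputting either a $(1\pm\beta)$-accurate estimate or $0$ both satisfy the indicator-gated guarantee.
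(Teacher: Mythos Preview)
The paper does not prove this statement. \Cref{thm:gaussian_kde} is stated in the Preliminaries section as a black-box citation of Theorem~2 from \citet{charikar2020kernel}; the present paper invokes it as a tool (inside the proof of \Cref{lem:approx_light_keys}) without reproving or even sketching it. So there is no ``paper's own proof'' against which to compare your attempt.

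That said, your sketch is a reasonable high-level outline of how the cited result is actually obtained in \citet{charikar2020kernel}: the HBE framework with importance-weighted bucket sampling, a multi-scale collection of Andoni--Indyk hash families indexed by geometric distance levels (this is exactly why \Cref{lem:andoni-indyk} is quoted from the same source), a worst-case variance analysis over the placement of mass at the critical radius, and finally median-of-means amplification. The numerical exponent $0.173$ indeed arises from the one-parameter optimization you describe, balancing the Gaussian decay against the $c^2\alpha$ far-collision exponent of \Cref{lem:andoni-indyk}. One caveat: your variance bound $\Var[Z_h(q)]\le \tfrac{1}{n^2}\sum_p k(p,q)^2/p(p,q)$ holds for the estimator that sums over the whole bucket, but in the actual construction one samples a \emph{single} point from the bucket (to keep query time independent of bucket size), which introduces an extra factor equal to the expected bucket size and requires a slightly more careful accounting across levels; this is where the $\wt O(\cdot)$ in the final bounds absorbs polylogarithmic overhead. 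If you intend to flesh this out, that is the step most likely to need additional care.
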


\section{Algorithm}\label{sec:full_algo}
The goal of this section is to describe the main algorithm and prove Theorem \ref{thm:basic_algo_main}. We will go about proving this using intermediate building blocks. We will work the following convenient re-phrasing of our Assumption \ref{assumption:formal} - the assumption says that if we denote $K$ as our Gaussian kernel matrix then $\|K\|_1$ minus the sum of the largest $n$ entries of $K$ is at most a constant times the sum of the largest $n$ entries of $K$, thus $\|K\|_1$ is at most a constant times the sum of the largest $n$ entries of $K$. Since each entry in $K$ is bounded by $1$, the assumption directly implies that $\|K\|_1 = O(n)$.
\subsection{Pre processing $x$}\label{sec:x_preprocessing}
This section describes a convenient pre-processing of $x$, starting with the following notation.
\begin{definition}\label{def:x_rounding}
Let $\gamma\in [0,1]$ be a threshold. Define the following subsets of $[n]$ as follows,
\begin{equation*}
    H_1 = \{j\in [n]:x_j^2 \geq n^{\gamma}\},  H_2 = \{ j\in [n]: x_j^2 \leq n^{-4}\}, H = H_1\cup H_2, 
\text{ and } T = [n]\setminus H.
\end{equation*}
Let $y_H,y_T \in \mathbb{R}^n$ be defined as follows, $(y_H)_i = \sum_{j\in H_1}k(q_i,k_j)x_j$ and $  (y_T)_i = \sum_{j\in T}k(q_i,k_j)x_j \quad \text{for all } i\in [n]$.
\end{definition}

We now state the following lemma which says that $y_H+y_T$ are a good approximation of $Kx$ and $y_H$ can be computed in $o(n^2)$ time. Its proof is provided in Appendix \ref{sec:appendix}.

\begin{lemma}\label{lem:estimate_heavy_x}
In time $O(d\cdot n^{2-\gamma})$ we can output the set $H$ and vector $y_H$. Moreover $\|Kx - (y_H+y_T)\|_2\leq \epsilon \|x\|_2$.
\end{lemma}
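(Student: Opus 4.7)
The plan is to split the proof into the runtime bound and the error bound, both of which reduce to routine bookkeeping once the right observations are made. Throughout I rely on the normalization $\|x\|_2^2 = n$ (introduced in Section \ref{sec:techniques}), which is justified since the target inequality is scale-invariant.

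For the \textbf{runtime}, the key observation is that $\sum_{j \in H_1} x_j^2 \leq \|x\|_2^2 = n$ and each term in this sum is at least $n^\gamma$, so $|H_1| \leq n^{1-\gamma}$. Identifying $H_1$, $H_2$, and their union/complement requires only a single pass over $x$ in $O(n)$ time. Then $y_H$ is a matrix-vector product restricted to the columns indexed by $H_1$: for each query $q_i$ I would compute the $|H_1|$ kernel values $k(q_i,k_j)$ for $j \in H_1$ at cost $O(d)$ each and accumulate them weighted by $x_j$. This is $O(d \cdot n^{1-\gamma})$ per query and $O(d \cdot n^{2-\gamma})$ over all $n$ queries, matching the claim.

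For the \textbf{error}, by construction $(y_H)_i + (y_T)_i = \sum_{j \in H_1 \cup T} k(q_i, k_j)\, x_j = \sum_{j \notin H_2} k(q_i, k_j)\, x_j$, so the discrepancy with $(Kx)_i$ is exactly
\[
(Kx - y_H - y_T)_i \;=\; \sum_{j \in H_2} k(q_i, k_j)\, x_j.
\]
Using $k(q_i, k_j) \leq 1$ (trivially for the Gaussian kernel) and $|x_j| \leq n^{-2}$ for $j \in H_2$, I would bound each coordinate by $|H_2| \cdot n^{-2} \leq n \cdot n^{-2} = n^{-1}$. Summing squares over $i \in [n]$ gives $\|Kx - y_H - y_T\|_2^2 \leq n \cdot n^{-2} = n^{-1}$, hence $\|Kx - y_H - y_T\|_2 \leq n^{-1/2} \leq \epsilon \sqrt{n} = \epsilon \|x\|_2$ in the relevant regime $\epsilon \geq 1/n$ (otherwise the target runtime $\wt O(dn^{1.89}/\epsilon^2)$ already exceeds the trivial $O(dn^2)$ bound, so this regime can be assumed without loss).

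Since the lemma reduces to the trivial bound $|K_{ij}| \leq 1$ for Gaussian kernels together with simple counting, there is no substantial obstacle; the only point I would double-check is that the threshold $n^{-4}$ used to define $H_2$ is tight enough to absorb the factor $|H_2| \leq n$ and still leave comfortable slack against $\epsilon\|x\|_2 = \epsilon \sqrt{n}$, which the computation above confirms.
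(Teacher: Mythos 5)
Your proposal is correct and follows essentially the same route as the paper: bound $|H_1|\leq n^{1-\gamma}$ by the normalization $\|x\|_2^2=n$, compute $y_H$ explicitly at cost $O(d\cdot n^{2-\gamma})$, and absorb the discarded $H_2$ contribution using $k(q_i,k_j)\leq 1$ and the smallness of $|x_j|$ on $H_2$. If anything, your coordinate bound $n\cdot n^{-2}=n^{-1}$ (from $x_j^2\leq n^{-4}$, i.e. $|x_j|\leq n^{-2}$) is the more careful reading of Definition \ref{def:x_rounding} than the paper's stated $n\cdot n^{-4}$, and your explicit remark that one may assume $\epsilon\geq 1/n$ tidies up the paper's implicit assumption that $\epsilon=\Theta(1)$; the conclusion is unaffected either way.
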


\subsection{Finding heavy keys}\label{sec:finding_heavy_keys}
The next objective is to approximate $y_T$. The goal of this section is to give the algorithm that explicitly finds for all queries $q_i$ for $i\in [n]$, the set of all keys $k_j$ which have a large contribution to $\sum_{j\in T} x_j k(q_i,k_j)$. We call such keys ``heavy'' and we now formally define them.
\begin{definition}\label{def:heavy_keys}
Let $\alpha\in [0,1]$ be a threshold. Consider any $i\in [n]$. For query $q_i$ define the set of ``heavy'' keys $S_i = \{j\in [n]: k(q_i,k_j)\geq n^{-\alpha}\}$.
\end{definition}

We now state the main lemma which says that we can find the set of heavy keys for all rows in $o(n^2)$ time, its proof is in Appendix \ref{sec:appendix}. The pseudocode of the algorithm is presented in Algorithm \ref{alg:find_heavy_keys}.
\begin{lemma}\label{lem:estimate_heavy_keys}
In time $\wt O(d\cdot n^{1+2\alpha})$ Algorithm \ref{alg:find_heavy_keys} \textsc{FindHeavy} returns all the sets $S_i$ for $i\in [n]$. The algorithm succeeds with probability $0.99$.
\end{lemma}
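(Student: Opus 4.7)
I would instantiate the Andoni–Indyk LSH family of \Cref{lem:andoni-indyk} with the same parameter $\alpha$. Then $r_{near}=\sqrt{2\sigma^2\alpha \ln n}$ is exactly the heavy threshold: $\|q_i-k_j\|_2 \le r_{near}$ iff $k(q_i,k_j)\ge n^{-\alpha}$. Algorithm \textsc{FindHeavy} builds $L=\Theta(n^{\alpha}\log n)$ independent hash tables, each containing all $n$ keys. Then for every query $q_i$ it scans the $L$ buckets $q_i$ lands in, evaluates $k(q_i,k_j)$ in $O(d)$ time for every candidate it sees, and adds $j$ to $S_i$ iff $k(q_i,k_j)\ge n^{-\alpha}$.

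\textbf{Correctness.} A single heavy pair $(q_i,k_j)$ collides in one table with probability at least $n^{-\alpha}$, so the probability that it is missed in all $L$ tables is at most $(1-n^{-\alpha})^L \le n^{-C}$ for $L=C n^{\alpha}\log n$. Under assumption \ref{eq:A} we have $\|K\|_1=O(n)$, and since every heavy entry has value at least $n^{-\alpha}$ the total number of heavy pairs across all queries is at most $O(n^{1+\alpha})$. A union bound over these pairs, with $C$ taken a large enough constant, yields success probability $\ge 0.99$.

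\textbf{Runtime.} The key observation is that the Andoni–Indyk collision probability matches the Gaussian kernel: a pair at distance $r$ collides in a single table with probability $n^{-\alpha (r/r_{near})^2(1-o(1))}=k(q_i,k_j)^{1-o(1)}$. I would then split the candidate pairs looked at by the algorithm into heavy and light:
\begin{itemize}
\item Heavy candidates: total number collected over all tables and queries is at most $L$ times the number of heavy pairs, i.e.\ $\wt O(n^{1+2\alpha})$; each is verified in $O(d)$ time.
\item Light candidates: the expected number per table per query is $\sum_{j:\, k(q_i,k_j)<n^{-\alpha}} P_{\mathrm{coll}}(q_i,k_j) \le \sum_j k(q_i,k_j)^{1-o(1)}$. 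Summed over $n$ queries and $L$ tables this is $\wt O(L \cdot \|K\|_1) = \wt O(n^{1+\alpha})$ by \ref{eq:A}. A Markov bound lets us absorb this into the $0.99$ success probability.
\end{itemize}
Multiplying by $O(d)$ per candidate, both contributions fit into $\wt O(d n^{1+2\alpha})$.

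\textbf{Main obstacle.} The nontrivial step is the light-candidate bound: naively each bucket could hold up to $n$ keys and give a useless $\wt O(Ln^2d)$ total. The saving comes from the exact correspondence between LSH collision probability and Gaussian kernel value, which lets me charge the expected light collisions directly against $\|K\|_1$, so that assumption \ref{eq:A} finally kicks in. A minor technical point is that \Cref{lem:andoni-indyk} only guarantees the $n^{-c^2\alpha(1-o(1))}$ tail bound for $c\le \log^{1/7}n$; pairs at larger distances contribute at most $n\cdot n^{-\alpha \log^{2/7}n}=o(1)$ candidates per table per query and are easily absorbed.
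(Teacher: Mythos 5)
Your proposal is correct and follows essentially the same route as the paper: the same $\Theta(n^{\alpha}\log n)$ repetitions of the Andoni--Indyk LSH with union bound for recovering heavy pairs, the bound $\sum_i |S_i| = O(n^{1+\alpha})$ from $\|K\|_1=O(n)$, and an expected-collision bound for light keys charged against $\|K\|_1$ via the correspondence between collision probability and kernel value, finished with Markov. The only cosmetic difference is that the paper makes the light-collision charge rigorous through a dyadic level-set decomposition of kernel values, whereas you apply the collision bound pointwise (noting the $\log^{1/7}n$ cap for very far pairs); both amount to the same calculation.
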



\begin{algorithm}[H]
\caption{\textsc{FindHeavy}}\label{alg:find_heavy_keys}
\begin{algorithmic}[1]
    \STATE \textbf{Input: } Keys $k_1,k_2,\ldots,k_n$, Queries $q_1,q_2,\ldots,q_n$ threshold $\alpha>0$.
    \STATE \textbf{Output: } Sets $S_i$ for all $i\in [n]$ as per Lemma \ref{lem:estimate_heavy_keys}.
    \STATE Let $T=10n^{\alpha}\log n$.
    \STATE Let $\mathcal{H}$ be an Hash family as per Lemma \ref{lem:andoni-indyk}.
    \STATE Sample $T$ i.i.d. $h_1,\ldots,h_T \sim \mathcal{H}$. Hash entire dataset using these $T$ hash functions.
    \FOR{$i\in [n]$}
        \STATE Scan all the buckets $h_t(x_i)$  for all $t\in [T]$ and return all points in $S_i=\{x\in P: k(x,x_i)\geq n^{-\alpha}\}$.
    \ENDFOR
    \end{algorithmic}
\end{algorithm}

\subsection{Estimating contribution of light keys}\label{sec:approx_light}
After finding $S_i$, what remains is approximating $\sum_{j\in T\setminus S_i}k(q_i,k_j)x_j$ for all $i\in [n]$ up to additive error $\epsilon$. This is the main goal of this section formalized in the lemma below, its full proof is in Appendix \ref{sec:appendix}.
\begin{lemma}\label{lem:approx_light_keys}
In time $\wt O(d\cdot (n^{2+\gamma-\alpha}+n^{1.78+\gamma}/\epsilon^2))$ Algorithm \ref{alg:approximate_light_keys} \textsc{ApproxLight}, when executed on sets $S_i$ for $i\in [n]$ as per Lemma \ref{lem:estimate_heavy_keys} and set $T$, returns a vector $z\in \mathbb{R}^n$ which satisfies $|z_i-\sum_{j\in T\setminus S_i}k(q_i,k_j)x_j|\leq \epsilon$ for all $i\in [n]$ with probability $0.99$. 
\end{lemma}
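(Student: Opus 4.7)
The plan is to estimate each $Y_i := \sum_{j \in T \setminus S_i} k(q_i, k_j)\,x_j$ by uniform random sampling whose number of repetitions is chosen adaptively per row, with the per-row counts driven by the KDE primitive of \Cref{thm:gaussian_kde}. Fix a sampling rate $p = 1/n$. For each query $q_i$, define the basic estimator $\hat Y_i := p^{-1}\sum_{j \in U \cap (T \setminus S_i)} k(q_i,k_j)\,x_j$, where each $j\in[n]$ is placed in $U$ independently with probability $p$. Unbiasedness is immediate, and because the summands are independent Bernoullis
\begin{align*}
\Var(\hat Y_i) \;\le\; \frac{1}{p}\sum_{j\in T\setminus S_i} k(q_i,k_j)^2\,x_j^2 \;\le\; \frac{n^\gamma}{p}\cdot n^{-\alpha}\sum_{j} k(q_i,k_j) \;\le\; \frac{n^{1+\gamma-\alpha}}{p}\,\mu(q_i),
\end{align*}
using $x_j^2 \le n^\gamma$ on $T$, $k(q_i,k_j) \le n^{-\alpha}$ for $j \notin S_i$ (so $k^2 \le n^{-\alpha}k$), and $\sum_j k(q_i,k_j) = n\mu(q_i)$.

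To obtain additive error at most $\epsilon$ with probability $1 - 1/\mathrm{poly}(n)$ I would average $R_i = \tilde\Theta(\Var(\hat Y_i)/\epsilon^2) = \tilde\Theta(n^{1+\gamma-\alpha}\mu(q_i)/(p\epsilon^2))$ independent copies of $\hat Y_i$, together with a median-of-means boost and a union bound over the $n$ rows. Since $\mu(q_i)$ is not known up front, I would replace it by a constant-factor approximation $\tilde\mu_i$ obtained by invoking \Cref{thm:gaussian_kde} at each $q_i$ with a threshold $\mu_\star$. Rows whose KDE value falls below $\mu_\star$ are assigned $z_i=0$; their true value satisfies $|Y_i| \le n^{\gamma/2}\sum_j k(q_i,k_j) \le n^{1+\gamma/2}\mu(q_i) \le n^{1+\gamma/2}\mu_\star$, so a small enough $\mu_\star$ keeps the incurred error well under $\epsilon$.

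For the runtime, the KDE preprocessing and the $n$ query calls cost $\tilde O(dn/\mu_\star^{0.173})$; I would pick $\mu_\star$ (still far below $\epsilon/n^{1+\gamma/2}$) so that this matches the first runtime term $\tilde O(dn^{2+\gamma-\alpha})$. For the sampling phase, each copy of $\hat Y_i$ takes $O(d\cdot pn) = O(d)$ time (the expected sample size is $pn=1$), so the total sampling cost is $d \cdot \sum_i R_i$, and the essential calculation is
\begin{align*}
\sum_i R_i \;\le\; \tilde O\!\left(\frac{n^{1+\gamma-\alpha}}{p\,\epsilon^2}\right)\sum_i \mu(q_i) \;=\; \tilde O\!\left(\frac{n^{1+\gamma-\alpha}}{p\,\epsilon^2}\right)\cdot\frac{\|K\|_1}{n} \;=\; \tilde O\!\left(\frac{n^{2+\gamma-\alpha}}{\epsilon^2}\right),
\end{align*}
where the middle equality is exactly assumption \ref{eq:A}. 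For the value of $\alpha$ that the algorithm ends up using in \Cref{thm:basic_algo_main}, this matches the stated $n^{1.78+\gamma}/\epsilon^2$ exponent.

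The principal obstacle I anticipate is precisely the fact that the per-row repetition count depends on $\mu(q_i)$, which we do not have exactly; plugging in a KDE estimate requires a careful choice of the threshold $\mu_\star$. Too small a $\mu_\star$ and the KDE preprocessing overruns the first runtime term, while too large a $\mu_\star$ and the ``below threshold'' rows contribute more than $\epsilon$ error. Apart from this balancing act, the proof is essentially bookkeeping: the single crucial quantitative input is assumption \ref{eq:A}, which enters exactly once as $\sum_i \mu(q_i) = \|K\|_1/n = O(1)$ and is what keeps the aggregate repetition count from blowing up by an extra factor of $n$.
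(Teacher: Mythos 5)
Your proposal is correct, and it takes a genuinely different route at the one non-routine step of the argument: how the per-row repetition counts are obtained. The sampling core is identical to the paper's (sub-sample each light key with probability $1/n$, bound the variance of the unbiased estimator, Chebyshev plus median-of-means, union bound over rows, and the aggregate sample count $\wt O(n^{2+\gamma-\alpha}/\epsilon^2)$ driven by $\|K\|_1=O(n)$ from assumption \ref{eq:A} together with $K_{ij}\le n^{-\alpha}$ on light keys). Where you diverge is in estimating the per-row budget: the paper estimates the true variance proxy $\sum_{j\in T\setminus S_i}x_j^2K_{ij}^2$ by bucketing the $x_j^2$ into powers of $(1+\epsilon)$, building one KDE structure per bucket with the \emph{squared} kernel $k^2$, subtracting the heavy part $\sum_{j\in S_i}x_j^2K_{ij}^2$ exactly, and demanding relative error $n^{-\beta}$ (the $n^{-\beta}$ is needed because the pre-subtraction quantity lacks the $n^{-\alpha}$ factor, and the resulting oversampling $\wt O(n^{2+\gamma-\beta}/\epsilon^2)$ is balanced against the KDE cost $\wt O(dn^{1.346+2\beta}/\epsilon^{0.346})$ at $\beta=0.218$, which is exactly where the $n^{1.78+\gamma}/\epsilon^2$ term comes from). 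You instead relax the per-row variance analytically to $n^{\gamma-\alpha}\cdot n\mu(q_i)$ and only need a constant-factor estimate of the plain KDE $\mu(q_i)$ with a coarse threshold $\mu_\star\le\epsilon/n^{1+\gamma/2}$, zeroing out below-threshold rows; since the $n^{-\alpha}$ gain is already baked in before estimation, constant relative error suffices and no $\beta$-balancing or bucketing is needed. Both routes give the same aggregate $\sum_i R_i$ via $\sum_i\mu(q_i)=\|K\|_1/n=O(1)$; yours is simpler and, for the parameters actually used ($\alpha=1/3$, so $n^{2+\gamma-\alpha}/\epsilon^2\le n^{1.78+\gamma}/\epsilon^2$), fits within the stated bound and even avoids the paper's $n^{1.78+\gamma}/\epsilon^2$ bottleneck; the paper's sharper per-row (instance-adaptive) variance estimates buy nothing extra for the worst-case bound proved here. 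Two minor points to tidy up: your sampling term carries $1/\epsilon^2$ on $n^{2+\gamma-\alpha}$ (as does the paper's own intermediate bound, its equation for the total sample complexity), so your runtime statement differs literally from the lemma's first term for very small $\alpha$, though not for the $\alpha$ used downstream; and you do not need to push $\mu_\star$ down until the KDE cost matches $dn^{2+\gamma-\alpha}$ --- the natural choice $\mu_\star\approx\epsilon/n^{1+\gamma/2}$ already gives KDE cost $\wt O\bigl(dn^{1+0.173(1+\gamma/2)}/\epsilon^{0.173}\bigr)$, far below the sampling cost.
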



\begin{algorithm}[H]
\caption{\textsc{ApproxLight}}\label{alg:approximate_light_keys}
\begin{algorithmic}[1]
    \STATE \textbf{Input: } Keys $k_1,k_2,\ldots,k_n$, Queries $q_1,q_2,\ldots,q_n$, vector $x$, parameters $\alpha,\gamma,\epsilon>0$, set $T$ and sets $S_i$ for all $i\in [n]$.
    \STATE \textbf{Output: } A vector $z\in \mathbb{R}^n$ as per Lemma \ref{lem:approx_light_keys}.
    \STATE Let $B_m = \{ j\in T: x_{j}^2\in [(1+\epsilon)^{m-1},(1+\epsilon)^m]\}$ for $m\in [-4\log_{1+\epsilon}(n),\gamma \log_{1+\epsilon}(n)]$. 
    
    \STATE For every $m$ and $j\in B_m$ let $\overline{x}_j^2 = (1+\epsilon)^m$. 
    \STATE For every $B_m$ create a data structure as per Lemma \ref{thm:gaussian_kde} with data set $\{k_j: j\in B_m\}$, error parameter $n^{-0.218}$, $\mu = \epsilon^2/(n\log^2(n)(1+\epsilon)^m |B_m|)$, $\delta=1/n^2$, and kernel function $k^2(\cdot,\cdot)$.
    \FOR{$i\in [n]$}
        \STATE Let $t_i$ be the data structure output for query $q_i$, and let $s_i = t_i - \sum_{j\in S_i}x_j^2k(q_i,k_j)^2 +n^{-0.218}t_i$.
        \STATE Sub-sample every key in $T\setminus S_i$ with probability $1/n$, and sum $n\cdot x_j k(q_i,k_j)$ for every surviving key $k_j$.
        \STATE Take average of $10ns_i/\epsilon^2$ such repetitions, then median of $10\log n$ such averages.
        \STATE Set $z_i$ to be this median.
    \ENDFOR
    \STATE Return $z$.
    \end{algorithmic}
\end{algorithm}
We now have all the parts  to state the proof of our main theorem, Theorem \ref{thm:basic_algo_main}. The pseudocode of the complete algorithm is presented in Algorithm \ref{alg:full_algo} \textsc{ApproxKMV}.
\begin{proof}[Proof of Theorem \ref{thm:basic_algo_main}]
    We first use Lemma \ref{lem:estimate_heavy_x} to estimate $y_H$ in time $O(d\cdot n^{2-\gamma})$. We then let $T = [n]\setminus H$. Next, we run Algorithm \ref{alg:find_heavy_keys} \textsc{FindHeavy} to find sets $S_i$ for all $i\in [n]$. Its correctness is guaranteed by Lemma \ref{lem:estimate_heavy_keys}, and it runs in time $\wt O(d \cdot n^{1+2\alpha})$. Finally we run Algorithm \ref{alg:approximate_light_keys} \textsc{ApproxLight} on the set $T$ and sets $S_i$ for all $i\in [n]$, to obtain the vector $z$ in time $\wt O(d\cdot(n^{2+\gamma-\alpha}+n^{1.78+\gamma}/\epsilon^2))$. $z$ satisfies the guarantees as per Lemma \ref{lem:approx_light_keys}. We then define the vector $\wt{y}_T\in \mathbb{R}^{n}$ as follows, $(\wt{y}_T)_i = z_i+\sum_{j\in S_i}k(q_i,k_j)x_j$ for all $i\in [n]$ and let $y = \wt{y}_T+y_H$. Thus we get that $\|Kx-y\|_2\leq \|Kx-y_H-y_T\|_2+\|\wt{y}_T - y_T\|_2\leq 2\epsilon \|x\|_2$,
    where we used the fact that $|(\wt{y}_T)_i - (y_T)_i| = |z_i - \sum_{i\in T\setminus S_i}k(q_i,k_j)x_j|\leq \epsilon$ for all $i\in [n]$. We scale down $\epsilon$ by 2 and set $\gamma = 0.109,\alpha = 1/3$ to balance the exponents in the runtime, to obtain the overall runtime of $\wt O(dn^{1.89}/\epsilon^2)$. A union bound over success probabilities gives the final success probability.
\end{proof}

\begin{algorithm}[H]
\caption{\textsc{ApproxKMV}}\label{alg:full_algo}
\begin{algorithmic}[1]
    \STATE \textbf{Input: } Keys $k_1,k_2,\ldots,k_n$, Queries $q_1,q_2,\ldots,q_n$, vector $x$, parameter $\epsilon>0$.
    \STATE \textbf{Output: } A vector $y\in \mathbb{R}^n$ such that $\|Kx-y\|_2\leq \epsilon \|x\|_2$.
    \STATE Let $H\subseteq [n]$ and $y_H\in \mathbb{R}^n$ be the output of Lemma \ref{lem:estimate_heavy_x} for $\gamma=0.109$. Let $T=[n]\setminus H$.
    \STATE Let $S_i$ for all $i\in [n]$ be the output of Algorithm \ref{alg:find_heavy_keys} \textsc{FindHeavy} when executed for $\alpha=1/3$.
    \STATE Let $z\in \mathbb{R}^n$ be the output of Algorithm \ref{alg:approximate_light_keys} \textsc{ApproxLight} when executed for set $T$, sets $S_i$ $\forall i\in [n]$, $\gamma=0.109,\alpha=1/3$ and $\epsilon$.
    \STATE Output the vector $y\in \mathbb{R}^n$ defined as $y_i = z_i+ (y_H)_i+ \sum_{j\in S_i}k(q_i,k_j)x_j$.
    \end{algorithmic}
\end{algorithm}

\section{Empirical validation of our model}\label{sec:experiments}
In this section we empirically evaluate our modelling assumption on the Gaussian matrices observed in the context of fast attention computation for transformer models. We start by introducing our main computation problem of interest: multiplying the dot product self-attention matrix by a vector, an operation that naturally arises in widely used Transformer models \citep{vaswani2017attention}. Consider a sequence of $n$ tokens. For each token $i$ there is key, query and value embedding denoted by $k_i,q_i,v_i \in \mathbb{R}^d$ respectively, for all $i\in [n]$. We use $Q,K,V\in \mathbb{R}^{n\times d}$ to denote the query and key matrices whose $i^{th}$ rows are $q_i,k_i,v_i$ respectively for all $i\in [n]$. Let $A$ to denote the $n\times n$ un-normalized attention matrix whose $(i,j)^{th}$ entry is $e^{\langle q_i,k_j\rangle/\sqrt{d}}$ for all $(i,j)\in [n]\times [n]$. Thus $A = exp(QK^T/\sqrt{d})$ where $exp(.)$ is entry wise exponentiation. Let $D=diag(A\mathbbm{1}_n)$ denote the diagonal matrix containing the row sums of $A$ on the corresponding diagonal entry. The main computational problem in self-attention is to compute $D^{-1}A V$, which naively takes $\Omega(n^2\cdot d)$ time.

Consider the computational problem of computing the matrix-vector product $Ax$ for an arbitrary vector $x\in \mathbb{R}^{n}$. When $x=\mathbbm{1}_n$, the all ones vector, then $A\mathbbm{1}_n$ will be the vector of row sums and thus can be used to compute the diagonal scaling matrix $D=diag(A\mathbbm{1}_n)$. Finally for the value embedding of each token $v_i$ we can compute $Av_i$ for all $i\in [n]$ to compute $AV$. We will now use the following lemma to reduce this problem to an instance of the problem we study - Gaussian kernel matrix-vector computation. Its proof is provided in Appendix \ref{sec:appendix}.
We note that a similar reduction from attention matrices to Gaussian kernel matrices was presented in \citet{zandieh2023kdeformer}; the new reduction we give here is preferable, as it has better precision guarantees, and is also independent of the vector $x$ being multiplied with the attention matrix (thus, our reduction need only be performed once per matrix, rather than once per matrix-vector pair as in \citet{zandieh2023kdeformer}).  
\begin{lemma}\label{lem:attentiontogaussian_reduction}
For any collection of vectors $\{k_i\}_{i=1}^n,\{q_i\}_{i=1}^n \subseteq \mathbb{R}^d$, there exists a corresponding collection of vectors $\{k'_i\}_{i=1}^n,\{q'_i\}_{i=1}^n\subseteq \mathbb{R}^{d+1}$ such that for any vector $x\in \mathbb{R}^n$, 
\begin{equation*}
    \sum_{j\in [n]} x_j e^{\frac{\langle q_i,k_j\rangle}{\sqrt{d}}} = e^{\|q_i\|_2^2}\cdot e^{\max_{j\in [n]}\|k_j\|_2^2}\cdot \sum_{j\in [n]}x_j e^{-\frac{\|q'_i- k'_j\|_2^2}{2\sqrt{d}}} \quad \forall i\in [n].
\end{equation*}
\end{lemma}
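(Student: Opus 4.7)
\textbf{Proof plan for Lemma \ref{lem:attentiontogaussian_reduction}.}

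The plan is to exploit the standard polarization identity $\langle q,k\rangle = \tfrac{1}{2}(\|q\|_2^2 + \|k\|_2^2 - \|q-k\|_2^2)$, after first lifting the data by one coordinate so that all lifted keys have the same norm. That equalization is what allows the key-dependent term $\|k_j\|_2^2$ to be pulled out of the summation over $j$ as a single constant.

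Concretely, I would set $M:=\max_{j\in [n]}\|k_j\|_2^2$ and define
\[
  q'_i := (q_i,\, 0)\in\mathbb{R}^{d+1},\qquad
  k'_j := \bigl(k_j,\, \sqrt{M-\|k_j\|_2^2}\bigr)\in\mathbb{R}^{d+1},
\]
which is well-defined because $M\ge\|k_j\|_2^2$ for all $j$. With this choice all lifted keys satisfy $\|k'_j\|_2^2 = M$ and the lifted queries satisfy $\|q'_i\|_2^2 = \|q_i\|_2^2$, while the inner product is preserved: $\langle q'_i,k'_j\rangle = \langle q_i,k_j\rangle$. Expanding $\|q'_i-k'_j\|_2^2 = \|q'_i\|_2^2 + \|k'_j\|_2^2 - 2\langle q'_i,k'_j\rangle$ and solving for the inner product therefore yields
\[
  \langle q_i,k_j\rangle \;=\; \tfrac{1}{2}\bigl(\|q_i\|_2^2 + M - \|q'_i-k'_j\|_2^2\bigr),
\]
uniformly in $j$.

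Dividing by the appropriate scaling factor and exponentiating then gives an identity of the form
\[
  e^{\langle q_i,k_j\rangle/\sqrt{d}}
  \;=\; e^{\|q_i\|_2^2/(2\sqrt{d})}\cdot e^{M/(2\sqrt{d})}\cdot e^{-\|q'_i-k'_j\|_2^2/(2\sqrt{d})},
\]
in which the first two factors depend only on $i$ and on the global constant $M$, and so can be pulled out of the $j$-sum after multiplying both sides by $x_j$ and summing. (One can equivalently absorb the $\sqrt{d}$ factor into the lifted vectors by rescaling each of $q'_i,k'_j$ by $d^{-1/4}$, which matches the bandwidth $2\sigma^2=\sqrt{d}$ used elsewhere in the paper; this also explains the shape of the constants appearing in the statement.) Summing over $j\in[n]$ weighted by $x_j$ then delivers the claimed identity, with the prefactor $e^{\|q_i\|_2^2}\cdot e^{\max_j\|k_j\|_2^2}$ arising from the $i$-dependent and global constants being factored out.

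There is no real obstacle here: the only things to verify are that the new coordinate is real (ensured by the choice $M=\max_j\|k_j\|_2^2$) and that the algebra of the polarization identity composes correctly with the $1/\sqrt{d}$ scaling inside the exponent. The innovation relative to the reduction of \citet{zandieh2023kdeformer} is precisely that the lifted vectors depend only on $\{k_j\},\{q_i\}$ and not on $x$, so the reduction is applied once per matrix rather than once per matrix-vector product.
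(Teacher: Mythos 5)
Your proposal is correct and takes essentially the same route as the paper's own proof: the identical lifting (append $\sqrt{\max_j\|k_j\|_2^2-\|k_j\|_2^2}$ to each key and $0$ to each query) followed by the same expansion of $\|q'_i-k'_j\|_2^2$, whether phrased via polarization on the lifted vectors or, as the paper does, via $\|q'_i-k'_j\|_2^2=\|q_i-k_j\|_2^2+w_j^2$. Note that, exactly like the paper's proof, your algebra produces the prefactor $e^{\|q_i\|_2^2/(2\sqrt{d})}\cdot e^{\max_j\|k_j\|_2^2/(2\sqrt{d})}$ rather than the $e^{\|q_i\|_2^2}\cdot e^{\max_j\|k_j\|_2^2}$ printed in the lemma, so the missing $1/(2\sqrt{d})$ in the statement's exponents is a typo in the paper, not a gap in your argument.
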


This lemma and the discussion preceding it imply that we can use a Gaussian kernel matrix vector multiplication algorithm to calculate $Ax$ for any arbitrary $x\in \mathbb{R}^n$. 

We formalize the modelling assumption \ref{eq:A} and state it as follows,
\begin{equation}
  \tag{A}\label{assumption:formal}
  \parbox{\dimexpr\linewidth-5em}{%
    \strut
For a set of $n$ keys and queries $\{k_i\}_{i=1}^n,\{q_i\}_{i=1}^n \subseteq \mathbb{R}^d$, consider the self-attention matrix $A\in \mathbb{R}^{n\times n}$ defined as $A_{i,j} = e^{\langle q_i,k_j\rangle/\sqrt{d}}$ for all $i,j \in [n]$. Let $\{k'_i\}_{i=1}^n,\{q'_i\}_{i=1}^n\subseteq \mathbb{R}^{d+1}$ be the set of keys and queries obtained after applying the reduction of Lemma \ref{lem:attentiontogaussian_reduction}, and $K\in \mathbb{R}^{n\times n}$ be the Gaussian kernel matrix obtained from them defined as $K_{i,j} = e^{-\|q'_i-k'_j\|_2^2/2\sqrt{d}}$. Then the ratio of $\|K\|_1$ minus the sum of the top $n$ entries of $K$ and the sum of the top $n$ entries of $K$ is at most a constant $c>0$ independent of $n$. 
\strut
  }
\end{equation}

To validate this assumption experimentally, we proceed as follows:
\begin{CompactItemize}
    \item we take attention matrices computed in practice by a Transformer model on real data;
    \item for each attention matrix and its associated keys and queries computed by the model, we apply the reduction of Lemma \ref{lem:attentiontogaussian_reduction} to obtain a Gaussian kernel matrix;
    \item we verify our assumption (\ref{assumption:formal}) for this Gaussian kernel matrix. 
\end{CompactItemize}

\textbf{Evaluation methodology:}
We consider a pre-trained BERT base (uncased) model \citep{DBLP:journals/corr/abs-1810-04805}, which is a transformer based model pre-trained on a large corpus of English data. We use the Huggingface transformers library for our experiments \citep{wolf2019huggingface}. This model has 12 layers with 12 self-attention heads per each layer. We then obtain the attention matrices from this model as follows - We consider all sentences obtained from responses of all questions in the public Stanford Question Answering Dataset (SQuAD) dataset \citep{rajpurkar-etal-2016-squad}. Our experiments are performed on the Google colaboratory platform's free tier version.
For each sentence we use the tokenizer used in the BERT pre-training to tokenize the sentence. Then we feed this sequence of tokens into BERT and inspect all the self-attention activations across each layer. Our code is in the supplementary material. We also present additional experimental evaluation on other models RoBERTa \citep{liu2019roberta} and GPT \citep{radford2018improving} in the appendix Section \ref{sec:additional_experiments}.

Fix a sentence, suppose it has $n$ tokens after tokenization, and pass it through BERT. Then fix a layer and an attention head in that layer. We obtain the key and query embeddings $\{k_i,q_i\}$ produced by this attention head. Then we use the reduction of Lemma \ref{lem:attentiontogaussian_reduction} to produce the modified set of keys and queries $\{k_i',q_i'\}$ that we use to construct a Gaussian kernel matrix denoted by $K\in \mathbb{R}^{n\times n}$ as described in \ref{assumption:formal}. To demonstrate Assumption \ref{assumption:formal}, we consider all principal sub matrices of $K$. More specifically, we consider $K[:i,:i]$ for $i\in [50,n]$. This is natural for studying how our model scales with input sequence length as $K[:i,:i]$ is the kernel matrix obtained from the prefix of the input sequence containing the first $i$ tokens. We choose a min prefix length of $50$ so as to start observing asymptotic behavior. The maximum $n$ goes up to is 512, the max context length of BERT. 

\noindent\emph{Experiment (i).}
For a prefix length $i\in [50,n]$, we compute the sum of the top $i$ largest entries in $K[:i,:i]$ denoted by $a_i$ and we compute the sum of the remaining $i^2-i$ entries in $K[:i,:i]$ which will be $\|K[:i,:i]\|_1-a_i$. We then compute the max of $(\|K[:i,:i]\|_1-a_i)/a_i$ over all $i\in [n]$. We then take the max of $\max_{i\in [n]}(\|K[:i,:i]\|_1-a_i)/a_i$ over every sentence in the collection of sentences we consider. We thus get an accumulated max ratio over all sentences for each head and each layer. Figure \ref{fig:main_fig_1} lists these accumulated max ratios per layer per attention head. 

\begin{figure}[h]

\centering
\includegraphics[width=0.7\linewidth, height=6cm]{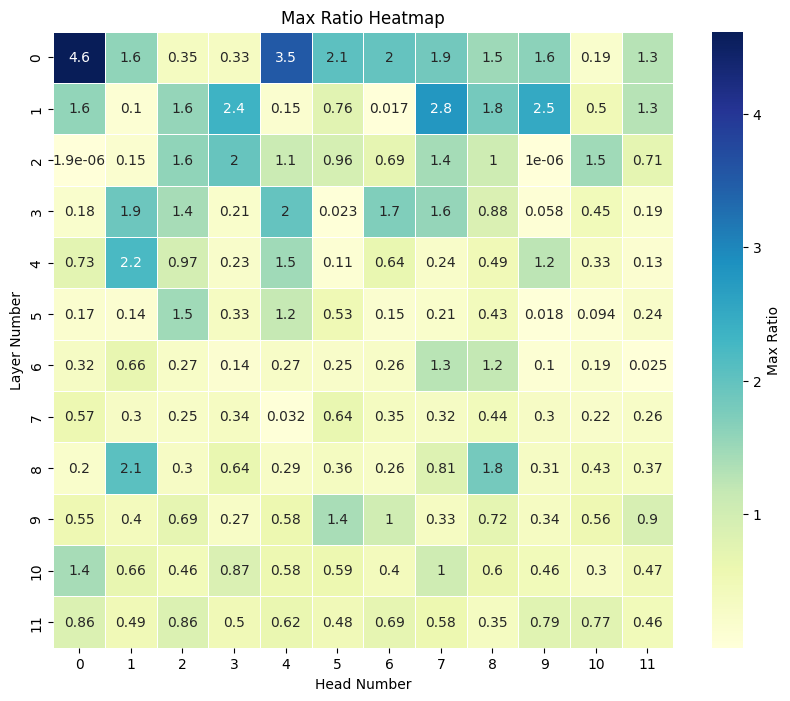} 
\label{fig:max_ratios}

\caption{Statistics of max ratios.}
\label{fig:main_fig_1}
\end{figure}

\noindent\emph{Experiment (ii).}
Next, we consider a set of experiments to show that the large values in the reduced Gaussian kernel matrices after the removal of the largest $n$ elements, are comparable to the values in the largest $n$ elements. 

We consider the same collection of sentences from the entire SQuAD dataset as before. We fix a sentence, with number of tokens denoted by $n$ after tokenization, and pass it through BERT. Then for each layer and each head we extract the key and query embeddings and construct the reduced Gaussian kernel matrix $K$ using Lemma \ref{lem:attentiontogaussian_reduction}. Then we calculate the ratio of the $n^{th}$ and $2n^{th}$ largest as well as of the $n^{th}$ and $(n+1)^{th}$ largest entries of $K$, and take the median of these ratio across all all sentences. Thus we get two median ratios per head per layer. Figure \ref{fig:main_fig_2} shows a visualization of these median ratios the reduced Gaussian kernel matrices. 

\begin{figure}[h]

\begin{subfigure}{0.5\textwidth}
\centering
\includegraphics[width=\linewidth, height=6cm]{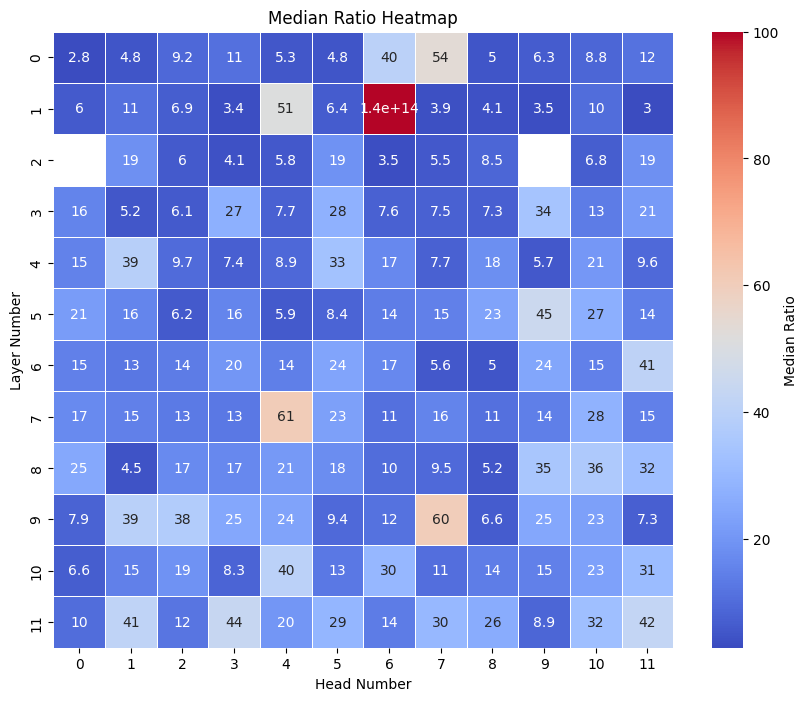} 
\caption{Median ratio of $n^{th}$ and $2n^{th}$ largest.}
\label{fig:median_topntop2n_gaussian}
\end{subfigure}
\begin{subfigure}{0.5\textwidth}
\includegraphics[width=\linewidth, height=6cm]{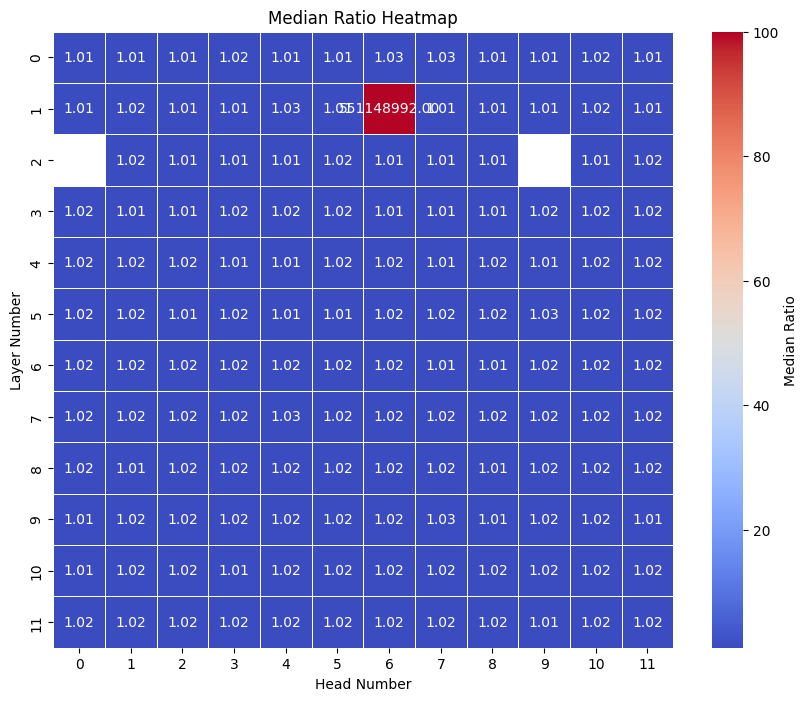}
\caption{Median ratio of $n^{th}$ and $(n+1)^{th}$ largest.}
\label{fig:median_topntopnn1_gaussian}
\end{subfigure}

\caption{Statistics of  ratio of $n^{th}$ largest with $2n^{th}$ and $(n+1)^{th}$ largest entries.}
\label{fig:main_fig_2}
\end{figure}

\subsection{Results}
\noindent\emph{Experiment (i):}
From inspecting the numbers in Figure \ref{fig:main_fig_1} across all 12 layers and 12 heads per layer, we observe that all of these are less than $4.6$, and often significantly smaller. We interpret this as strong evidence that the constant $c$ in Assumption (\ref{assumption:formal}) is small, thus validating our model. 

\noindent\emph{Experiment (ii):}
From Figure \ref{fig:median_topntop2n_gaussian} we observe that for most of the attention heads, the median ratio of $n^{th}$ and $2n^{th}$ largest entries of the reduced Gaussian kernel matrices is about 20 or less \footnote{The blank white entries correspond to infinite entries due to a division by 0.}. This implies that in most cases, the $n^{th}$ largest and the $2n^{th}$ largest entries have comparable value. Moreover from Figure \ref{fig:median_topntopnn1_gaussian} we observe that for almost all attention heads, the median ratio of $n^{th}$ and $(n+1)^{th}$ largest is about 1. 
The implication of this result is that we cannot rely on the strong assumption, that after the removal of the largest $n$ entries, there is small uniform upper bound on the values of the remaining entries on the matrices we study. 
We interpret this as further motivation for our assumption (\ref{assumption:formal}), which only assumes total sum of entries in the largest $n$ entries and the sum of the remaining entries after removing the largest $n$ is comparable.

\section{Conclusion}\label{sec:conclusion}
In this paper we study fast algorithms for approximate Gaussian kernel matrix vector multiplication motivated by the problem of fast processing of attention matrices encountered in modern language models. 

Our results are two fold, first we do an empirical study of Gaussian kernel matrices derived from attention matrices in the context of fast attention computation using pre-trained language models to arrive at a modelling assumption that the sum of all but the largest $n$ entries of the Gaussian kernel matrix is comparable to the sum of the largest $n$ entries. This modelling assumption implies the sum of entries of the whole matrix scales \emph{linearly} in the matrix dimension as opposed to worst case quadratic growth. 

Our second contribution is to design a provable approximate matrix vector multiplication algorithm for these class of matrices that runs in time subquadratic in the matrix dimension. Our algorithm is not only faster than previous algorithms but also can handle multiplying the matrix with vectors that can have negative entries, which was not possible with previous algorithms. 

A limitation of our work is that our algorithms operate under a structural assumption on the input matrices---namely, of the linear growth of the sum of the entries in the matrix $K$. Although we provide an empirical validation of this assumption, the set of matrices occurring in practice is very rich, and no assumption will model such matrices perfectly.
\section{Acknowledgements}
PI was supported by the NSF TRIPODS program (award DMS-2022448), Simons
Investigator Award, GIST-MIT Research Collaboration grant, and Wistron Corporation. 
TW was supported by Len Blavatnik and the Blavatnik Family foundation and by an Alon Scholarship of the Israeli Council for Higher Education. TW is also with Amazon; this work is not associated with Amazon. 

\bibliography{iclr2025_conference}
\bibliographystyle{iclr2025_conference}

\appendix
\section{Appendix}\label{sec:appendix}
\subsection{Additional experiments}\label{sec:additional_experiments}
We consider the same experimental setup for two additional models, RoBERTa \citep{liu2019roberta}, which builds on BERT and modifies key hyperparameters, removing the next-sentence pretraining objective and training with much larger mini-batches and learning rates, and GPT-1 model by OpenAI \citep{radford2018improving}. Again we use the Huggingface transformers library \citep{wolf2019huggingface} for loading the pre-trained models and we consider their default configurations in the library - both of these models have configuration of 12 layers with 12 self-attention heads per each layer and max context length 512. We consider all sentences obtained from responses of all questions in the public Stanford Question Answering Dataset (SQuAD) dataset \citep{rajpurkar-etal-2016-squad}.

For each sentence we use the corresponding Huggingface tokenizer to tokenize the sentence. Then we feed this sequence of tokens into the model and inspect all the self-attention activations across each layer.

\subsubsection{Setup}\label{sec:setup} Fix a sentence, suppose it has $n$ tokens after tokenization, and pass it through the model. Then fix a layer and an attention head in that layer. We obtain the key and query embeddings $\{k_i,q_i\}$ produced by this attention head. Then we use the reduction of Lemma \ref{lem:attentiontogaussian_reduction} to produce the modified set of keys and queries $\{k_i',q_i'\}$ that we use to construct a Gaussian kernel matrix denoted by $K\in \mathbb{R}^{n\times n}$ as described in \ref{assumption:formal}. To demonstrate Assumption \ref{assumption:formal}, we consider all principal sub matrices of $K$. More specifically, we consider $K[:i,:i]$ for $i\in [50,n]$. This is natural for studying how our model scales with input sequence length as $K[:i,:i]$ is the kernel matrix obtained from the prefix of the input sequence containing the first $i$ tokens. We choose a min prefix length of $50$ so as to start observing asymptotic behavior. The maximum $n$ goes up to is 512, the max context length of each model. 

\subsubsection{Statistics of max ratios}
For a prefix length $i\in [50,n]$, we compute the sum of the top $i$ largest entries in $K[:i,:i]$ denoted by $a_i$ and we compute the sum of the remaining $i^2-i$ entries in $K[:i,:i]$ which will be $\|K[:i,:i]\|_1-a_i$. We then compute the max of $(\|K[:i,:i]\|_1-a_i)/a_i$ over all $i\in [n]$. We then take the max of $\max_{i\in [n]}(\|K[:i,:i]\|_1-a_i)/a_i$ over every sentence in the collection of sentences we consider. We thus get an accumulated max ratio over all sentences for each head and each layer. Figure \ref{fig:main_fig_2} lists these accumulated max ratios per layer per attention head for both RoBERTa and GPT-1. 

\begin{figure}[h]

\begin{subfigure}{0.5\textwidth}
\centering
\includegraphics[width=\linewidth, height=6cm]{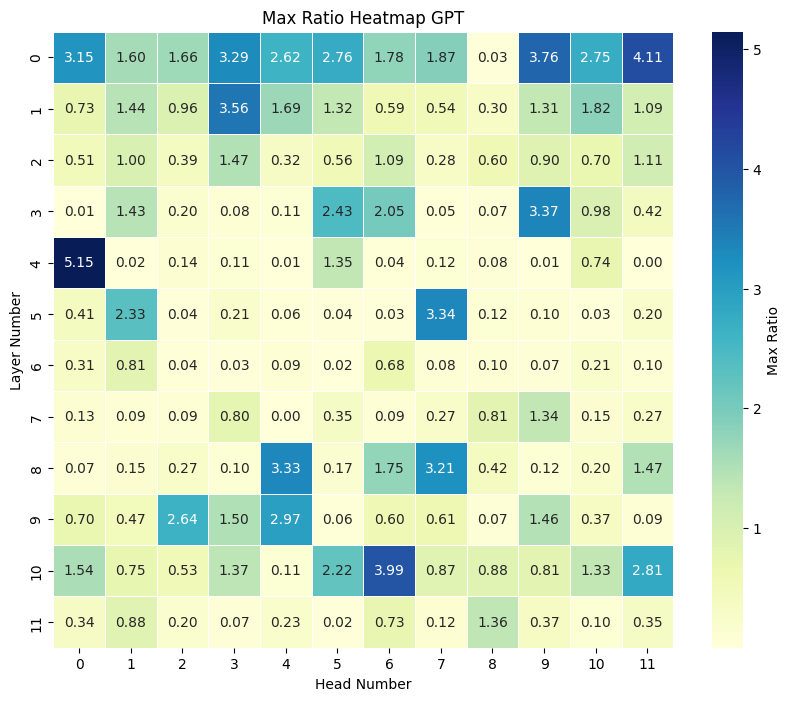} 
\caption{Max ratio heatmap for GPT-1.}
\label{fig:median_topntop2n_gaussian}
\end{subfigure}
\begin{subfigure}{0.5\textwidth}
\includegraphics[width=\linewidth, height=6cm]{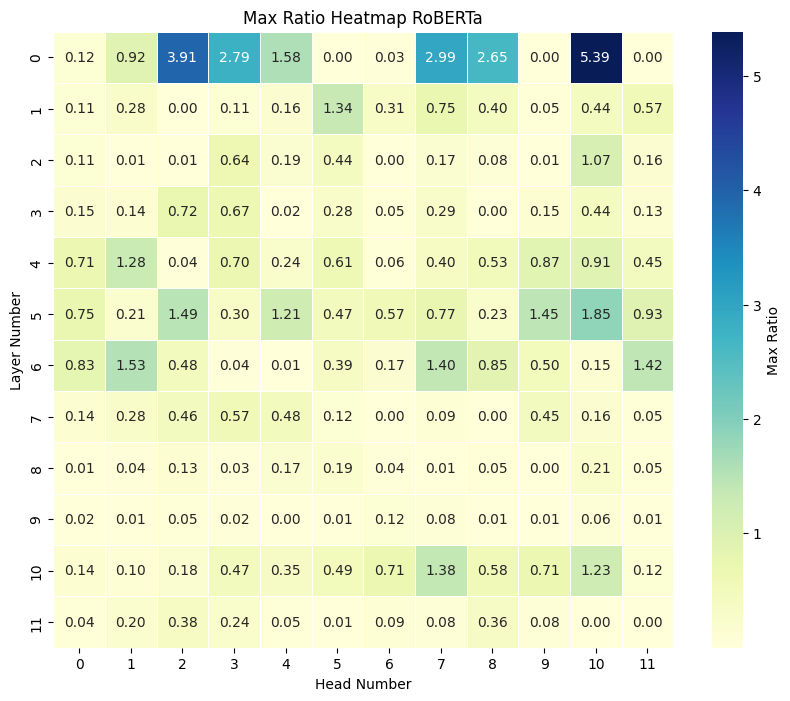}
\caption{Max ratio heatmap for RoBERTa.}
\label{fig:max_ratios_roberta_gpt}

\end{subfigure}
\caption{Statistics of max ratios.}
\label{fig:main_fig_2}
\end{figure}
From inspecting the numbers in Figure \ref{fig:main_fig_2} across all 12 layers and 12 heads per layer, we observe that all of these are less than $5.15$ for GPT and $5.39$ for RoBERTa, and often significantly smaller. We interpret this as further evidence that the constant $c$ in Assumption (\ref{assumption:formal}) is small, thus validating our model. 

\subsubsection{Scaling of the constant $c$ with context length}
We perform additional experiments to validate our hypothesis that the constant $c$ in Assumption \ref{assumption:formal} does not scale increasingly with the context length. For each of the considered models, BERT RoBERTa and GPT, we consider the following experiment.

Recall the setup of Section \ref{sec:setup}. We consider context lengths starting from 50 to 512 in increments of 50. Then for each context length $i$ in this list, we again let $a_i$ be the sum of entries in $K[:i,:i]$ and compute the max over $(\|K[:i,:i]\|_1-a_i)/a_i$ across all layers and heads, and then take the average and standard deviation of this over all sentences in the dataset. Thus for each context length $i$ from 50 to 512 in increments of 50, we obtain a max ratio across all layers, heads and sentence prefixes of length $i$ in the dataset. The maximum is over all sentences that are of length at least $i$ after  tokenization. 
We plot these averages with standard deviations as the width of the error bars on the y axis and the context length on the x axis in Figure \ref{fig:c_scaling}.
We can interpret from the figures that the value of $c$ stays constant within a range of the figures as strong evidence for our modeling assumption that $c$ stays constant with the sequence length.

\begin{figure}[h]

\begin{subfigure}{0.33\textwidth}
\centering
\includegraphics[width=\linewidth, height=4cm]{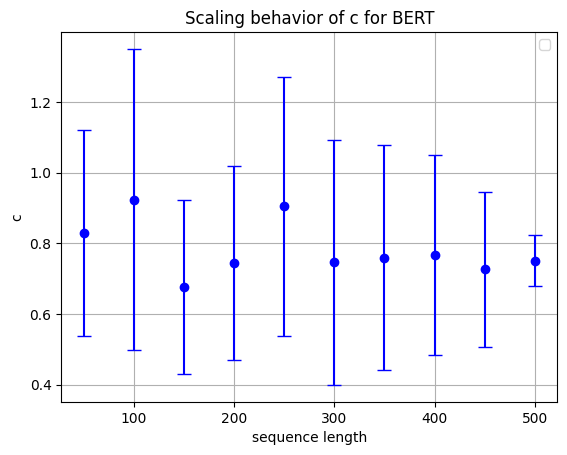} 
\caption{Scaling of $c$ for BERT.}
\label{fig:c_scaling_bert}
\end{subfigure}
\begin{subfigure}{0.33\textwidth}
\includegraphics[width=\linewidth, height=4cm]{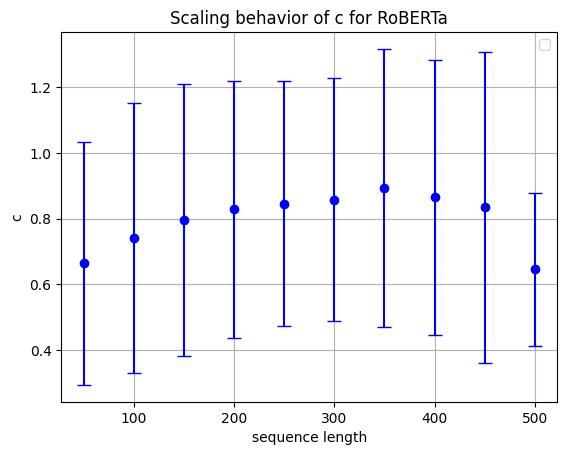}
\caption{Scaling of c for RoBERTa.}
\label{fig:c_scaling_roberta}

\end{subfigure}
\begin{subfigure}{0.33\textwidth}
\includegraphics[width=\linewidth, height=4cm]{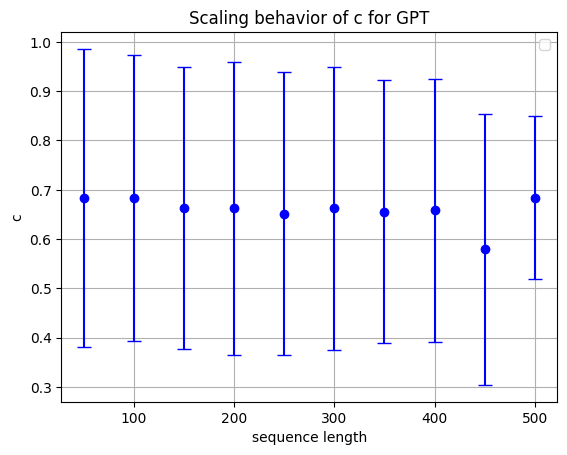}
\caption{Scaling of c for GPT.}
\label{fig:c_scaling_gpt}

\end{subfigure}
\caption{Scaling of $c$ with context length.}
\label{fig:c_scaling}
\end{figure}
 
\subsubsection{Subquadratic scaling of runtime}
We perform an experiment that runs a basic implementation of our algorithm based on the KDEFormer implementation of \cite{zandieh2023kdeformer} for normalized attention approximation, and compares the wall clock times with exact attention computation. We select matrices $Q,V\in \mathbb{R}^{n\times d}$ from the GloVe word embeddings with batch size 8, dimension $d=100$ and set $K=Q$. We consider this data selection for different sequence lengths ranging from 4K to 16K in increments of 1K. Our goal is to approximate normalized self attention for a single vector $v=V[:,1]$, that is to compute $D^{-1}Av$ for $A=\exp(QK^T/\sqrt{d})$ each sequence length. At a high level the KDEFormer implementation of \cite{zandieh2023kdeformer} uses locality sensitive hashing to approximate the contribition of heavy entries in the attention matrix to the attention computation. It then uses column norm sampling to find a subset of keys, and only uses the columns corresponding to those keys in the attention matrix to approximate the residual component of the attention matrix. Our implementation builds on top of this to compute the empirical variance of the column norm sampling based estimator for each query used to approximate the residual component of the attention matrix. We then compute the average empirical variance across all queries, and take 1.5 times more samples in column norm sampling for the queries with empirical variance higher than the average. This reflects one of our main ideas to use adaptive sampling budgets for each row/query of the attention matrix.

We then compute the ratio of the wall clock time for our implementation and the exact algorithm for each sequence length. For each sequence length our implementation's parameters are such that the ratio of the error in approximating normalized attention and the $\ell_2$ norm of $v$ is always within $0.1\pm 0.05$. This allows us to observe the runtime behavior across the difference sequence lengths under an approximately fixed error. We report the ratio of runtimes of exact vs our implementation as a function of sequence length in Fig. \ref{fig:runtime_ratio}. The ratios of exact to approximate runtimes increases with sequence length, suggesting sub-quadratic scaling of our runtime.
\begin{figure}[h]
\centering
\includegraphics[width=0.7\linewidth, height=6cm]{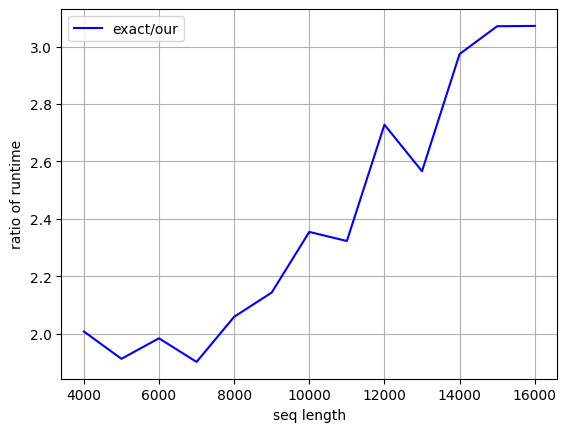} 
\label{fig:max_ratios}
\caption{Ratio of runtimes of exact vs our implementation as a function of sequence length.}
\label{fig:runtime_ratio}
\end{figure}
\subsection{Full Proofs}
In the appendix we provide the full proofs of Lemmas \ref{lem:estimate_heavy_x},\ref{lem:estimate_heavy_keys}, \ref{lem:approx_light_keys} and \ref{lem:attentiontogaussian_reduction}.
\begin{proof}[Proof of Lemma \ref{lem:estimate_heavy_x}]
Since we know that $\sum_{j\in [n]}x_{j}^2 = n$, a simple Markov bound implies that $|H_1|\leq n^{1-\gamma}$. Corresponding to the entries in $H_1$ we explicitly calculate $y_H$ using its definition in Definition \ref{def:x_rounding}. To do this we need to explicitly calculate $n\cdot |H_1|$ entries of $K$, which takes time $n\cdot |H_1| \cdot O(d) = O(d\cdot n^{2-\gamma})$.  

Next, since each entry in the matrix $K$ has value at most $1$, we have that $(Kx-y_H-y_T)_i\leq n\cdot n^{-4} = n^{-3}$ for all $i\in [n]$. Thus $\|Kx-y_H-y_T\|_2\leq n^{-3}\cdot \sqrt{n}\leq \epsilon \|x\|_2$ since $\epsilon =\Theta(1)$ and $\|x\|_2 = \sqrt{n}$.
\end{proof}
\begin{proof}[Proof of Lemma \ref{lem:estimate_heavy_keys}]
Fix any query $q_i$ for $i\in [n]$ and let $\mu_i = (\sum_{j=1}^{n}k(q_i,k_j))/n$, thus using a Markov bound we get the following, $$|S_i|\leq n^{\alpha}\cdot (n\mu_i)=n^{1+\alpha}\mu_i.$$ 
Consider $T=10n^{\alpha}\log n$ independent LSH hash functions $h_1,\ldots,h_T\sim \mathcal{H}$ as per Lemma \ref{lem:andoni-indyk}. Then for any key $k_j$ for $j\in S_i$ we have the following, $$\mathbb{P}[\exists t\in [T] \text{ s.t. } h_t(q_i)=h_t(k_j)]=1-(1-1/n^{\alpha})^{10n^{\alpha} \log n}\geq 1-1/n^{10}.$$ Taking union bound over all rows $i\in [n]$ and at most $n$ heavy points per row, we get that with probability at least $1-1/n$,  $S_i$ can be recovered during query time by scanning the buckets that $q_i$ hash to for all $i\in [n]$. 

Now for any $i\in [n]$ let $L_{i,m} = \{j\in [n]: k(q_i,k_j)\in [2^{-m},2^{-m+1}]\}$ for $m\in \{\alpha \log n ,\log (1/\mu_i)\}$ and let $L_i = \bigcup_{m=\alpha \log n}^{\log 1/\mu_i}L_{i,m}$. Then again by a Markov argument we know that $|L_{i,m}|\leq 2^m n\mu_i $ for all $i\in [n]$. Note that for any independent copy of the LSH hash function $h_t$ we have the following for all $j\in L_{i,m}$ $$\mathbb{P}[h_t(q_i)=h_t(k_j)]\leq n^{-\alpha(1-o(1))\cdot \frac{m\ln 2}{\alpha \ln n}} = 2^{-m(1-o(1))}.$$ Thus by linearity of expectation we have that $$\mathbb{E}[|\{j\in L_{i,m}:h_{t}(q_i)=h_t(k_j)\}|]\leq |L_{i,m}|2^{-m(1-o(1))}\leq 2n^{1+o(1)}\mu_{i}$$ for all $j$. Thus again by linearity of expectation this implies that $$\mathbb{E}[|\{j\in L_{i}:\exists t\in[T] \text{ s.t. }h_{t}(q_i)=h_t(k_j)\}|]\leq \wt{O}(n^{1+\alpha+o(1)}\mu_i).$$
Thus we get that in expectation the number of non-heavy points across all rows that we may have to scan due to collision is at most $\sum_{i=1}^{n}\wt{O}(n^{1+\alpha+o(1)}\mu_i)=\wt{O}(n^{1+\alpha})$ since $\sum_{i=1}^{n}\mu_i=\mathbbm{1}^{T}K\mathbbm{1}/n = O(1)$. This also holds with probability at least $0.99$ due to Markov's inequality. 

This implies that in time $n\cdot T=\wt{O}(n^{1+\alpha})$ we can hash all keys during pre-processing. Then for every row $i$, we can scan all the buckets that query $q_i$ hashes to across all repetitions and return the union of all keys $k_j$ landing in the same bucket as $q_i$ satisfying $k(q_i,k_j)\geq n^{-\alpha}$. As per our previous discussion we get that with probability $0.99$, this scan will take us time $$T\cdot\sum_{i=1}^{n}( |S_i|)+\wt{O}(n^{1+\alpha+o(1)})=\wt{O}(n^{1+2\alpha})$$ and we will recover $S_i$ for all $i\in [n]$. For every row $i$, we will brute force calculate $\sum_{j\in S_i}x_j k(q_i,k_j)$ and this will take us overall time 
\begin{equation}\label{eqn:total_heavy_time}
    \sum_{i=1}^{n}|S_i|\leq n^{1+\alpha}\sum_{i=1}^{n}\mu_i = \wt{O} (n^{1+\alpha}).
\end{equation}
\end{proof}
Next we present the proof of Lemma \ref{lem:approx_light_keys}
\begin{proof}[Proof of Lemma \ref{lem:approx_light_keys}]
    Let $L_i = T\setminus S_i$ for every $i\in [n]$.
Let $K_{ij}$ be the $i,j$ element of $K$, and let $K_i$ denote the $i^{th}$ row of $K$. For each row $i$, we will sub-sample every key in $L_i$ with probability $1/n$ (This can be done by sub-sampling every key with probability $1/n$ and only retaining those keys with index in $L_i$). 
Thus define the following random variable $X_{ij}=n\cdot x_j K_{ij}$ with probability $1/n$ and $0$ otherwise, thus $\mathbb{E}[\sum_{j\in L_i}X_{ij}] = \sum_{j\in L_i}x_j K_{ij}$. Thus $Var(X_{ij})\leq (n\cdot x_j K_{ij})^2/n = n\cdot x_j^2 K_{ij}^2$. Thus $Var(\sum_{j\in L_i}X_{ij})\leq n\cdot (\sum_{j\in L_i}x_j^2 K_{ij}^2)$. Thus by Chebyshev's inequality $\sum_{j\in L_i }X_{ij} = \sum_{j\in L_i}x_j K_{ij}\pm \epsilon$ with probability $0.9$ for any fixed $i$ if we take the average of  $10n\cdot (\sum_{j\in L_i}x_j^2 K_{ij}^2/\epsilon^2)$ independent repetitions of $\sum_{j\in L_i}X_{ij}$ . If we take the median of $10\log(n)$ independent repetitions, then by Chernoff bound we get an estimator that is within $\sum_{j\in L_i}x_j K_{ij}\pm \epsilon$ with probability $1-1/{10n}$. Now by a union bound this holds for all rows with probability $0.9$. The expected number of samples taken across all rows is 
\begin{equation*}
  10\log n \sum_{i\in [n]}n 
   \left (\sum_{j\in L_i}x_j^2 K_{ij}^2/\epsilon^2\right) =\wt{O}\left(
   \frac{n^{1+\gamma}}{\epsilon^2}\sum_{i\in [n]}\sum_{j\in L_i}K_{ij}^2\right).
\end{equation*}

It can be seen that under the constraint that all $K_{ij}\leq n^{-\alpha}$ $\forall j\in L_i$ and $\|K\|_1= O(n)$, 
\begin{equation*}
    \sum_{i\in [n]}\sum_{j\in L_i}K_{ij}^2\leq n^{-\alpha}\sum_{i\in [n]}\sum_{j\in L_i}K_{ij}=O(n^{1-\alpha}).
\end{equation*}

Plugging this back into the expression on the expected number of samples across all rows and applying Markov's inequality, we get that with probability at least $0.99$ the total amount of samples taken is \begin{equation}\label{eqn:total_sample_complexity}
    \wt{O}(n^{2+\gamma-\alpha}/\epsilon^2).
\end{equation}
What remains to estimate $\sum_{j\in L_i}x_j^2K_{ij}^2$ for each row $i\in [n]$ to get the number of times we need to repeat the estimator for averaging to reduce the variance. We will do this using a KDE data structure to estimate $\sum_{j\in T}x_j^2K_{ij}^2$ and subtracting $\sum_{j\in S_i}x_j^2K_{ij}^2$ explicitly from the estimate for each $i\in [n]$. We will do this as follows. Let $\beta \in [0,1]$ be a parameter. We will first do a convenient bucketing of entries in $x$.\\
\textbf{Rounding}: First we will round the entries of $x_j^2$ to the nearest powers of $(1+\epsilon)^{m}$ for integers $m$ in $[-4\log_{1+\epsilon}(n),\gamma \log_{1+\epsilon}(n)]$. This covers all $x_{j}^2\in [n^{-4}, n^{\gamma}]$, thus all $j\in T$. Let $B_m = \{ j\in T: x_{j}^2\in [(1+\epsilon)^{m-1},(1+\epsilon)^m]\}$. For every $m\in [-4\log_{1+\epsilon}(n),\gamma \log_{1+\epsilon}(n)] $ and $j\in B_m$ let $\overline{x}_j^2 = (1+\epsilon)^m$. This implies the following for all $i\in [n]$,
\begin{equation*}
    \sum_{j\in T} K_{ij}^2 x_j^2 - \sum_{j\in T} K_{ij}^2 \overline{x}_j^2 \leq 2\epsilon\sum_{j\in T} K_{ij}^2 x_j^2.
\end{equation*}
\textbf{Estimation within each bucket}: Fix an $m\in [-4\log_{1+\epsilon}(n),\gamma \log_{1+\epsilon}(n)]$ . Note that since $\|x\|_2^2 = n$ and for each $j\in B_m$ we have that $x_j^2\geq (1+\epsilon)^m$, we have that $|B_m|\leq n/(1+\epsilon)^m$. Now for every $B_m$ we will create a Gaussian KDE data structure with data set as $\{k_j: j\in B_m\}$, relative error parameter of $n^{-\beta}$, failure probability $\delta=1/n^2$, and a KDE lower bound of $\frac{\epsilon^2}{n\log^2(n)(1+\epsilon)^m|B_m|}$. This lower bound satisfies the following from the bound on the size of $B_m$, $$\frac{\epsilon^2}{n\log^2(n)(1+\epsilon)^m|B_m|}\geq \frac{\epsilon^2}{n^2\log^2(n)}.$$ Thus, the KDE data structure can be created and queried $n$ times in time $\wt{O}(dn\cdot (n^2/\epsilon^2)^{0.173}/n^{-2\beta}) =\wt{O}(dn^{1.346+2\beta}/\epsilon^{0.346}) $.
This setting of the KDE lower bound implies that if for any row $i\in [n]$, the KDE value corresponding to this bucket is less than this lower bound then its contribution is at most
\begin{align*}
   \sum_{j\in B_m}x_j K_{ij}&\leq \sqrt{n\cdot \sum_{j\in B_m}x_j^2 K_{ij}^2 }\\
   &\leq \sqrt{n\cdot (1+\epsilon)^{m+1}|B_m| \cdot \frac{\epsilon^2}{n\log^2(n)(1+\epsilon)^m|B_m|}}\\
   &\leq \frac{\epsilon}{\log n}.
\end{align*}
This implies that since there are at most $O(\log n)$ many buckets, ignoring the contribution of buckets with KDE smaller than the corresponding lower bound results in an additive error of $\epsilon$ in the end. 

Thus without loss of generality we will assume that all buckets contributing to $\sum_{j\in T}x_j^2K_{ij}^2$ for $i\in [n]$ have contribution above the corresponding KDE lower bound. This implies in time $\wt{O}(dn^{1.346+2\beta}/\epsilon^{0.346})$ we can output an estimate $t_i$ satisfying the following for all $i\in [n]$, $$\sum_{j\in T}x_j^2 K_{ij}^2\leq t_i\leq \sum_{j\in T}x_j^2 K_{ij}^2+ n^{-\beta}\sum_{j\in T}x_j^2 K_{ij}^2.$$ We will use $t_i- \sum_{j\in S_i}x_j^2 K_{ij}^2 + n^{-\beta}t_i$ as an estimate of $\sum_{j\in L_i}x_j^2 K_{ij}^2 $. This is clearly an over estimate of $\sum_{j\in L_i}x_j^2 K_{ij}^2$ from the guarantee on $t_i$, and the over-estimation error will just lead to oversampling in the previous discussion. The additional number of samples we will take due to this oversampling due to the error is 
\begin{equation*}
\wt{O}((n/\epsilon^2)\cdot n^{-\beta}\sum_{i\in [n]}\sum_{j\in T}x_jK_{ij}^2) = \wt{O}((n^{1-\beta+\gamma}/\epsilon^2)\cdot \sum_{i\in [n]}\sum_{j\in T}K_{ij}^2 ).    
\end{equation*}
 Now we know that since $K_{ij}\leq 1$ for all entries in $K$, we have that $\sum_{i\in [n]}\sum_{j\in T}K_{ij}^2\leq \sum_{i,j\in [n]}K_{ij} = O(n)$. Thus overall the additional number samples needed due to oversampling caused by estimation error is $\wt O(n^{2+\gamma-\beta}/\epsilon^2)$ Thus combining this additional additive oversampling factor with the sample complexity bound of the equation \ref{eqn:total_sample_complexity}, we get that the total sample complexity is 
\begin{equation}\label{eqn:total_sample_complexity_additive_error}
    \wt O(n^{2+\gamma -\alpha}+n^{2+\gamma -\beta}/\epsilon^2).
\end{equation}
The total time to estimate the sampling probabilities is $\wt O(dn^{1.346+2\beta}/\epsilon^{0.346})$. Balancing this with $O(n^{2+\gamma-\beta}/\epsilon^2)$ we set $\beta=0.218$. Plugging in these values, the overall runtime is $\wt O(d(n^{2+\gamma-\alpha}+n^{1.78+\gamma}/\epsilon^2))$.
\end{proof}

We finally state the proof of Lemma \ref{lem:attentiontogaussian_reduction}.
\begin{proof}[Proof of Lemma \ref{lem:attentiontogaussian_reduction}]
Let $\alpha = \max_{j\in [n]}\|k_j\|_2^2$ and let $w_{j} = \sqrt{(-\|k_j\|_2^2 +\alpha)}$ for all $j\in [n]$. Append $w_j$ and $0$ as $(d+1)^{th}$ coordinates to $k_j$ and $q_j$ respectively to obtain $k'_j,q'_j\in \mathbb{R}^{d+1}$. Then we can observe the following,
\begin{align*}
    e^{-\frac{\|q'_i- k'_j\|_2^2}{2\sqrt{d}}}& = e^{\frac{-\|q_i- k_j\|_2^2}{2\sqrt{d}}-\frac{w_j^2}{2\sqrt{d}}}\\
    &= e^{-\frac{\|q_i\|_2^2}{2\sqrt{d}}}\cdot e^{-\frac{\max_{j\in [n]}\|k_j\|_2^2}{2\sqrt{d}}} \cdot e^{\frac{\langle q_i,k_j\rangle}{\sqrt{d}}}.
\end{align*}
Multiplying this with $x_j$ and summing up over all $j\in [n]$, we finish the proof of the lemma.
\end{proof}

\end{document}